\newcommand{\R}{\mathbb{R}}
\newcommand\independent{\protect\mathpalette{\protect\independenT}{\perp}}
\def\independenT#1#2{\mathrel{\rlap{$#1#2$}\mkern2mu{#1#2}}}
\newcommand{\norm}[1]{\left\lVert#1\right\rVert}
\pgfplotsset{compat=1.14}
\DeclareMathOperator{\Tr}
\definecolor{bblue}{HTML}{4F81BD}
\definecolor{rred}{HTML}{C0504D}
\definecolor{ggreen}{HTML}{9BBB59}
\definecolor{ppurple}{HTML}{9F4C7C}
\definecolor{csacd}{rgb}{0.9, 0.17, 0.31} 
\definecolor{cgcd}{rgb}{0.27, 0.51, 0.71}
\definecolor{cbcdp}{rgb}{0.04, 0.85, 0.32} 
\definecolor{cfhals}{rgb}{0.62, 0.0, 1.0} 
\definecolor{cfmu}{rgb}{1.0, 0.83, 0.0} 
\definecolor{capg}{rgb}{0, 0, 0} 
\definecolor{ccdtf}{rgb}{0.5, 0.44, 0.51} 
\begin{document}
\definecolor{turquoise}{rgb}{0 0.41 0.41}
\definecolor{rouge}{rgb}{0.79 0.0 0.1} 
\definecolor{mauve}{rgb}{0.6 0.4 0.8} 
\definecolor{violet}{rgb}{0.58 0. 0.41} 
\definecolor{orange}{rgb}{0.8 0.4 0.2} 
\definecolor{bleu}{rgb}{0.39, 0.58, 0.93} 
\title{Efficient Nonnegative Tensor Factorization via Saturating Coordinate Descent}

\author{Thirunavukarasu Balasubramaniam}
\orcid{1234-5678-9012-3456}
\affiliation{%
  \institution{Queensland University of Technology}
  \streetaddress{2 George Street}
  \city{Brisbane}
  \state{QLD}
  \postcode{4000}
  \country{Australia}}
\email{thirunavukarasu.balas@qut.edu.au}
\author{Richi Nayak}
\orcid{1234-5678-9012-3456}
\affiliation{%
  \institution{Queensland University of Technology}
  \streetaddress{2 George Street}
  \city{Brisbane}
  \state{QLD}
  \postcode{4000}
  \country{Australia}
}
\email{r.nayak@qut.edu.au}
\author{Chau Yuen}
\orcid{1234-5678-9012-3456}
\affiliation{%
 \institution{Singapore University of Technology and Design}
 \streetaddress{8 Somapah Road}
 \country{Singapore}}
\email{yuenchau@sutd.edu.sg}

\begin{abstract}
With the advancements in computing technology and web-based applications, data is increasingly generated in multi-dimensional form. This data is usually sparse due to the presence of a large number of users and fewer user interactions. To deal with this, the Nonnegative Tensor Factorization (NTF) based methods have been widely used. However existing factorization algorithms are not suitable to process in all three conditions of size, density, and rank of the tensor. Consequently, their applicability becomes limited. In this paper, we propose a novel fast and efficient NTF algorithm using the element selection approach. We calculate the element importance using Lipschitz continuity and propose a saturation point based element selection method that chooses a set of elements column-wise for updating to solve the optimization problem. Empirical analysis reveals that the proposed algorithm is scalable in terms of tensor size, density, and rank in comparison to the relevant state-of-the-art algorithms. 
\end{abstract}

%
%
\begin{CCSXML}
<ccs2012>
<concept>
<concept_id>10010147.10010257.10010293.10010309</concept_id>
<concept_desc>Computing methodologies~Factorization methods</concept_desc>
<concept_significance>500</concept_significance>
</concept>
<concept>
<concept_id>10002951.10003317.10003347.10003350</concept_id>
<concept_desc>Information systems~Recommender systems</concept_desc>
<concept_significance>300</concept_significance>
</concept>
<concept>
<concept_id>10002951.10003227.10003236</concept_id>
<concept_desc>Information systems~Spatial-temporal systems</concept_desc>
<concept_significance>100</concept_significance>
</concept>
<concept>
<concept_id>10003752.10003809.10003716.10011138.10011140</concept_id>
<concept_desc>Theory of computation~Nonconvex optimization</concept_desc>
<concept_significance>100</concept_significance>
</concept>
</ccs2012>
\end{CCSXML}

\ccsdesc[500]{Computing methodologies~Factorization methods}
\ccsdesc[300]{Information systems~Recommender systems}
\ccsdesc[100]{Information systems~Spatial-temporal systems}
\ccsdesc[100]{Theory of computation~Nonconvex optimization}

%
%

\keywords{Nonnegative Tensor Factorization, Coordinate Descent, Element selection, Saturating Coordinate Descent, Pattern Mining, Recommender systems}

\maketitle

\renewcommand{\shortauthors}{Balasubramaniam T. et al.}

\section{Introduction}

With the advent of tagging, sensor and Internet of Things (IoT) technologies, online interaction data can be easily generated with tagging or spatio-temporal information. Tensor models become the natural choice to represent a multi-dimensional dataset, for example as ($user$ $\times$ $image$ $\times$ $tag$) or ($user$ $\times$ $location$ $\times$ $time$) \cite{ifada2014tensor,kolda2009tensor,symeonidis2016matrix,balasubramaniam2019sparsity}. A Tensor Factorization (TF) based method decomposes the tensor model into multiple factor matrices where each matrix learns the latent features inherent in the usage dataset~\cite{kolda2009tensor,zheng2014diagnosing}. These factor matrices can be used to reconstruct (or approximate) the tensor to predict missing entries. These entries can be inferred as new items that have been generated based on correlations and dependencies between the data, and help in making recommendation generation and link prediction \cite{rendle2010pairwise,dunlavy2011temporal,ermics2015link,adomavicius2005toward}. A learned factor matrix reveals the latent features that are able to represent hidden patterns in the dataset in that particular dimension, and can help to understand people’s mobility patterns and usage \cite{symeonidis2013geosocialrec}. 

A TF process is challenging for three main reasons. Firstly, a tensor model tends to grow big as the data size increases. It especially becomes a problem for web applications due to a large Internet population and fewer user interactions. Secondly, due to fewer user interactions with all the items, the data generated is very sparse. Learning the associations or analyzing this sparse dataset is hard due to lack of correlation patterns. Thirdly, the complexity of factorization process increases with an increase in the rank of the tensor. Here, the rank represents the number of hidden features in the dataset, in other words, it decides the size of factor matrices. Computing a large size sparse tensor model for exploring hidden latent relationships in the dataset is still a challenging problem. Table~\ref{tab:one} shows the limited scalability of existing factorization algorithms. Nonnegative Tensor Factorization (NTF) which imposes the nonnegative constraint to the factor matrices learned during factorization is the backbone of recommendation and pattern mining methods to understand interactions between features. The development of a fast and efficient NTF algorithm is needed for developing effective web recommendation and pattern mining methods.  

We propose the Saturating Coordinate Descent (SaCD) factorization algorithm that reduces the complexity inherent in the factor matrix update by carefully selecting the important elements to be updated in each iteration. We propose a Lipschitz continuity based element importance calculation to effectively select the elements for an update that will lead fast convergence. We also propose a fast-parallelized version of SaCD (named as FSaCD) to further speed up the factorization process that suffers from Intermediate Data Explosion (IDE). IDE is caused due to the materialization and storage of the intermediate data generated. The column-wise element selection and element update minimizes the complexity of IDE and parallelization becomes easier. 

Extensive experiments with recommendation and pattern mining applications show improved scalability (i.e. size, density and rank) performance of SaCD and FSaCD in comparison to all relevant baseline algorithms. The results also show that the efficiency is achieved at no cost of accuracy. In fact, SaCD achieves high accuracy when compared to other benchmarks.

\begin{table}%
\caption{Scalability in terms of size (mode length), density (percentage of observed entries in the tensor) and rank of the tensor. Algorithms are ranked as low, medium, high, and very high based on their capability in executing the process without running out of memory or out of time for synthetic datasets used in Sections 6.2 and 6.6. SaCD and FSaCD are the proposed algorithms.}
\label{tab:one}
\begin{minipage}{\columnwidth}
\begin{center}
\begin{tabular}{llll}
\toprule
Algorithm & Mode length & Density & Rank\\
\toprule
APG \cite{zhang2016fast}  & Low  & Low  & Low\\
FMU \cite{phan2012fast} & Medium  & Low  & Medium\\
FHALS \cite{phan2013fast} & Medium  & Low  & Medium\\
BCDP \cite{xu2013block} & Medium  & Medium  & High\\
CDTF \cite{shin2017fully} & High  & Medium  & Low\\
GCD \cite{balasubramaniam2018nonnegative} & High  & Medium  & Low\\
SaCD & High  & High  & High\\
FSaCD  & High  & Very High  & High\\
\bottomrule
\end{tabular}
\end{center}
\bigskip\centering

\end{minipage}
\end{table}%

The specific contribution of this paper is four-fold. 
\begin{enumerate}
	\item We propose a novel Lipschitz continuity based element importance calculation method to select certain elements for updating in the factorization process.
	\item We propose a novel Saturating Coordinate Descent (SaCD) factorization algorithm that updates factor matrices by selecting elements using the proposed Lipschitz continuity based element importance and leads fast convergence to a local optimum solution.
	\item We apply SaCD in recommender systems and pattern mining effectively with high accuracy and scalability.
	\item We design FSaCD by parallelizing SaCD on the single machine using multi-cores that further improves the performance on big and dense datasets.
\end{enumerate}

To our best of knowledge, SaCD is the first algorithm that can efficiently process the large and sparse tensor model with large rank factor matrices, with and without parallelization.

\section{Related Work}
\textbf{Tensor Factorization based recommender systems and pattern mining methods}: Tensor models have been widely applied in several web-based applications and have shown superiority in generating latent patterns \cite{ifada2016relevant,ermics2015link,sun2016understanding}. The majority of model-based Collaborative Filtering methods, including Matrix Factorization (MF), fail to utilize a context along with users and items information because of their limited capability to deal with two-dimensional data only \cite{symeonidis2016matrix}. The capability of a tensor model to represent multidimensional data makes them a natural choice to represent the user interaction data \cite{kutty2012people}. The last decade has witnessed the rise of tensor-based recommendation methods \cite{ifada2014tensor,symeonidis2016matrix,yu2015survey,zheng2010photo,symeonidis2008tag,rendle2010pairwise}. 

A three dimensional tensor model represents the data as ($user$ $\times$ $item$ $\times$ $context$) and a tensor factorization (TF/NTF) algorithm can be used to capture dimensional dependencies and general recommendations~\cite{karatzoglou2010multiverse}. For example, a ($user$ $\times$ $item$ $\times$ $tag$) tensor model has been used in social tagging systems for representing how users have used tags with items~\cite{bouadjenek2016social,ifada2016relevant}. A TF method with higher order singular value decomposition (HOSVD) has been used to recommend items based on tagging behavior or to recommend tags for items~\cite{symeonidis2009unified}. Link prediction in social networks has been solved using Alternating Least Square (ALS) based NTF by predicting the missing relations in the dataset, for example, identifying relations among users or recommending items to users by considering time as the additional context \cite{ermics2015link}.  Researchers have used the $p-core$ approach to minimize the size and sparsity of the tensor by only including the data with occurrence of (user, item) at least $p$ times, as the traditional TF/NTF methods are not scalable and their accuracy on the sparse dataset is low \cite{ifada2016relevant,rendle2010pairwise,balasubramaniam2019sparsity}. However, the $p-core$ approach removes any users/items with less than $p$ interactions, thus, leaving the recommendation process incomplete for some users.

Recent advancements in IoT allow recording additional spatio-temporal information easily in addition to the web interactions. This has led to focus research on Location-Based Social Networks (LBSNs) \cite{yu2015survey}. Researchers have successfully applied TF/NTF using ALS for automated mining of temporal patterns \cite{balasubramaniam2018nonnegative,liu2013point}. For example, the Foursquare dataset with the user, venue and time is represented as a tensor and the $r$ number of patterns are identified by the factor matrix in the time mode where $r$ is a tensor rank \cite{zheng2014diagnosing}. Similarly, the temporal trajectories of online gamers are derived using NTF \cite{sapienza2018non}. These patterns allow learning the behavior of web users that helps to personalize and improve the interactions. Tensor rank is an important parameter in the factorization process that learns the hidden features of the dataset. These hidden features represent the data effectively in a lower dimension, where higher the number of hidden feature identified, higher is the chances of learning the true representation of dataset \cite{symeonidis2016matrix}. 

With the dependence of these methods on TF to learn the associations in the data, it is important that a factorization algorithm can efficiently deal the tensor models with the larger size, density, and rank. 

\textbf{Fast Tensor Factorization Algorithms}: Factorization algorithms namely Alternating Least Square (ALS) [23], Multiplicative Update rule (MU) \cite{cichocki2009nonnegative}, Gradient Descent (GD) \cite{zhang2016fast} and Coordinate Descent (CD) \cite{hsieh2011fast,wright2015coordinate,nesterov2012efficiency} have been commonly used to factorize the tensor models. These traditional algorithms have been extended to tensors based on MF. Researchers have explored the concepts of gradient calculation and tensor block to fasten the factorization process in these factorization algorithms. Fast Hierarchical ALS (FHALS) \cite{phan2013fast} and Fast MU (FMU) \cite{phan2012fast} are the optimized variants of ALS and MU by exploiting an efficient gradient calculation method for the faster execution. 
The traditional gradient calculation requires the tensor unfolding at each iteration which is relatively slow. FHALS and FMU use a simplified update rule with the gradient calculation that does not require the tensor unfolding. Accelerated Proximal Gradient (APG) \cite{zhang2016fast} utilizes a low-rank approximation to speed up the factorization process. The Block Coordinate Descent (BCDP)~\cite{xu2013block} technique has been applied in NTF to minimize the complexity by processing the tensor in convex blocks. The element selection-based CD algorithm called Greedy Coordinate Decent (GCD) \cite{hsieh2011fast} has shown fast convergence in Nonnegative Matrix Factorization (NMF) process. But the extension of GCD to NTF involves high complexity because of the frequent gradient updates \cite{balasubramaniam2018nonnegative}. 

Few researchers have explored parallel and distributed computational methods to minimize the time complexity involved in the Matricized Tensor Times khatri-Rao Product ($mttkrp$) and matrics updates during TF~\cite{shin2017fully,oh2017s,kang2012gigatensor,choi2014dfacto,beutel2014flexifact,park2016bigtensor,smith2015splatt}. CCD++~\cite{ccdpp} is a CD algorithm for MF that updates the elements in a factor matrix column-wise. Coordinate Descent for Tensor Factorization (CDTF)~\cite{shin2017fully} and Subset Alternating Least Square (SALS)~\cite{shin2014distributed} are two recent extensions of CCD++ for TF. While CDTF is a direct extension of CCD++ for TF, SALS is a special case with an additional constraint that controls the number of columns to update in a single iteration. The column-wise factor matrix update minimizes the complexity of $mttkrp$ in these algorithms and introduces a huge advantage in the distributed and parallel environment. However, these implementations do not alter the computational complexity inherent in the underlying factorization algorithm. We have implemented CDTF for NTF and used it as a benchmark in experiments. SALS cannot be directly extended for NTF due to the additional constraint it imposes. 

\textbf{Limitations}: While NTF has been applied widely with tensor based methods to utilize multi-dimensional data, it suffers from the scalability issue due to the complex matrix and tensor products involved in the factorization process \cite{shin2017fully}. Existing factorization algorithms are limited to small datasets only (as listed in Table~\ref{tab:one}). While parallel and distributed computational methods improve the runtime and scalability by utilizing huge resources, performance on a normal machine with fewer memory requirements remains poor \cite{oh2017s}. While optimizing the basic factorization process can improve the performance without depending on huge resources, not all the optimized factorization algorithms are easily extendable to a parallel and distributed environment to further improve the performance \cite{kimura2015variable}. Most importantly, not all the factorization algorithms can effectively handle tensors of large size, sparsity, and rank \cite{oh2017s}. We propose a factorization algorithm, SaCD, to efficiently process the large and sparse tensor model, with and without parallelization, by avoiding the frequent gradient calculations that are essential in existing algorithms \cite{hsieh2011fast,balasubramaniam2018nonnegative}.

\section{Basics: Preliminaries and Definitions}
The notations used in this paper are summarized in Table~\ref{table_notations}. The process of converting a tensor into a matrix is called as matricization or unfolding of tensor \cite{tucker1966some}. The $mode-1$ matricization can be denoted as \(\boldsymbol{\mathrm{X_1}} \in \R^{(Q \times (PS))}\) for a third order tensor \( \boldsymbol{\mathcal{X}} \in \R^{(Q \times P \times S)}\) where $Q$, $P$, and $S$ denotes the mode length. Several matrix products are required for tensor-based processes \cite{lathauwer2008decompositions}. We briefly introduce them.

\begin{table}[!t]
\caption{Table of Symbols}
\label{table_notations}
\centering
\begin{tabular}{ll}
\toprule
\bfseries Symbol & \bfseries Definition\\
\toprule

\( \boldsymbol{\mathcal{X}} \)   & tensor (Euler script letter)\\
$U,V,W$ & 3 modes of \( \boldsymbol{\mathcal{X}}\) \\
$Q,P,S$ & Length of mode $U,V,W$ respectively \\
\( \Omega \)   & set of indices of observable entries of \( \boldsymbol{\mathcal{X}} \)\\
\( x_{qps} \)   & $(q,p,s)^{th}$ entry of \( \boldsymbol{\mathcal{X}} \)\\
\( \Omega^U_q \)   & subset of $\Omega$ whose mode $U$'s index is $q$ \\
\( \Omega^V_p \)   & subset of $\Omega$ whose mode $V$'s index is $p$ \\
\( \Omega^W_s \)   & subset of $\Omega$ whose mode $W$'s index is $s$ \\
\( \boldsymbol{\mathrm{U}}\) & matrix (upper case. bold letter)\\
\( \boldsymbol{\mathrm{u}}\) & vector (lower case, bold letter)\\
\( \mathit{u}\) & scalar (lower case, italic letter) / element\\
\( \boldsymbol{\mathrm{X_n}}\) & mode-n matricization of tensor\\
\( R\) & rank of tensor\\
\(\otimes\) & Kronecher product\\
\(\odot\) & Khatri-Rao product\\
\(\ast\) & Hadamard product\\
\(\circ\) & outer product\\
\(\norm{.}\) & Frobenius norm\\
\bottomrule
\end{tabular}
\end{table}

\subsection{Kronecker Product}
For two matrices denoted as \(\boldsymbol{\mathrm{U}} \in \R^{(Q \times R)}\) and \(\boldsymbol{\mathrm{V}} \in \R^{(P \times R)}\), the Kronecker product is presented as \(\boldsymbol{\mathrm{U}} \otimes \boldsymbol{\mathrm{V}}\). The resultant matrix of size \( (QP  \times R^2) \) is defined as follows:

\begin{equation}
\label{eq_1}
\boldsymbol{\mathrm{U}} \otimes  \boldsymbol{\mathrm{V}} = \begin{bmatrix}
    u_{11}\boldsymbol{\mathrm{V}} & u_{12}\boldsymbol{\mathrm{V}} & u_{13}\boldsymbol{\mathrm{V}} & \dots  & u_{1r}\boldsymbol{\mathrm{V}} \\
    u_{21}\boldsymbol{\mathrm{V}} & u_{22}\boldsymbol{\mathrm{V}} & u_{23}\boldsymbol{\mathrm{V}} & \dots  & u_{2r}\boldsymbol{\mathrm{V}} \\
    \vdots & \vdots & \vdots & \ddots & \vdots \\
    u_{q1}\boldsymbol{\mathrm{V}} & u_{q2}\boldsymbol{\mathrm{V}} & u_{q3}\boldsymbol{\mathrm{V}} & \dots  & u_{qr}\boldsymbol{\mathrm{V}}
\end{bmatrix}
\end{equation}

\begin{equation}
  \label{eq_2} 
  = \begin{bmatrix} 
  \boldsymbol{\mathrm{u_{1}}} \boldsymbol{\mathrm{V}} & \boldsymbol{\mathrm{u_{2}}}\boldsymbol{\mathrm{V}} & \boldsymbol{\mathrm{u_{3}}}\boldsymbol{\mathrm{V}} & \dots  & \boldsymbol{\mathrm{u_{r}}}\boldsymbol{\mathrm{V}} \\
  \end{bmatrix},  
\end{equation}
where \(\boldsymbol{\mathrm{u_{r}}}\) is $r^{th}$ column of the factor matrix \(\boldsymbol{\mathrm{U}}\).

\subsection{Khatri-Rao Product}
The column-wise Kronecker product, called as Khatri-Rao product, is denoted as \(\boldsymbol{\mathrm{U}} \odot \boldsymbol{\mathrm{V}}\). The resultant matrix of size \( (QP  \times R)\) is defined as:
\begin{equation}
  \label{eq_3} 
 \boldsymbol{\mathrm{U}} \odot  \boldsymbol{\mathrm{V}}  = \begin{bmatrix} 
  \boldsymbol{\mathrm{u_1}} \otimes \boldsymbol{\mathrm{v_1}} &  \dots  & \boldsymbol{\mathrm{u_r}} \otimes \boldsymbol{\mathrm{v_r}} \\
  \end{bmatrix},  
\end{equation}
where \(\boldsymbol{\mathrm{u_{r}}}\) and \(\boldsymbol{\mathrm{v_{r}}}\) are columns of the matrices \(\boldsymbol{\mathrm{U}}\) and \(\boldsymbol{\mathrm{V}}\) respectively.

\subsection{Hadamard Product}
When the size of two matrices are same, the hadamard product can be calculated by the element-wise matrix product. It is denoted by \(\boldsymbol{\mathrm{U}} \ast \boldsymbol{\mathrm{V}}\) and defined as: 

\begin{equation}
\label{eq_4}
\boldsymbol{\mathrm{U}} \ast  \boldsymbol{\mathrm{V}} = \begin{bmatrix}
    u_{11}v_{11} & u_{12}v_{12} & u_{13}v_{13} & \dots  & u_{1r}v_{1r} \\
   u_{21}v_{21} & u_{22}v_{22} & u_{23}v_{23} & \dots  & u_{2r}v_{2r} \\
    \vdots & \vdots & \vdots & \ddots & \vdots \\
    u_{q1}v_{q1} & u_{q2}v_{q2} & u_{q3}v_{q3} & \dots  & u_{qr}v_{qr}
\end{bmatrix}.
\end{equation}

\subsection{Tensor Factorization}
Tensor factorization, a dimensionality reduction technique, is an extension of matrix factorization for higher order. It factorizes a tensor into factor matrices that contain latent features. CANDECOMP/PARAFAC (CP) and Tucker are two well-known factorization techniques \cite{kolda2009tensor}. CP has shown to be less expensive in both memory and time as compared to Tucker \cite{oh2017s,kolda2009tensor}.  

\textbf{\textit{Definition 1} (CP Factorization): } For a tensor \(\boldsymbol{\mathcal{X}} \in \R^{(Q\times P\times S)} \) and rank \(R\), CP factorization factorizes the tensor into a sum of component rank-one tensors \cite{carroll1970analysis} (as shown in Figure~\ref{fig_sim}) as:
\begin{equation}
\label{eq_5}
   \boldsymbol{\mathcal{X}} \cong \llbracket\boldsymbol{\mathrm{U}}, \boldsymbol{\mathrm{V}}, \boldsymbol{\mathrm{W}} \rrbracket = \sum_{r=1}^{R} \boldsymbol{\mathrm{u_{r}}} \circ \boldsymbol{\mathrm{v_{r}}} \circ \boldsymbol{\mathrm{w_{r}}},
\end{equation}
where \(\boldsymbol{\mathrm{U}} \in \R^{(Q \times R)}\), \(\boldsymbol{\mathrm{V}} \in \R^{(P \times R)}\) and \(\boldsymbol{\mathrm{W}} \in \R^{(S \times R)}\) are factor matrices with \(R\) hidden features, \(R \in \mathbb{Z}_{+} \). 
\begin{figure}[!t]
\centering
\begin{tikzpicture}
        \tikzset{xzplane/.style={canvas is xz plane at y=#1, thick,shading=axis,shading angle=90,top color = blue!30,draw=black}};
        \tikzset{yzplane/.style={canvas is yz plane at x=#1,thick,shading=axis,shading angle=90,top color = green!30,draw=black}};
        \tikzset{xyplane/.style={canvas is xy plane at z=#1,thick,shading=axis,shading angle=90,top color = red!50,draw=black}};
        \tikzset{xyplane2/.style={canvas is xy plane at z=#1,thick,shading=axis,shading angle=90,top color = red!50,draw=black}};

        \draw[xzplane=1.2] (0,0)--(1.2,0)--(1.2,1.2)--(0,1.2) --cycle;
        \draw[yzplane=1.2] (0,0)--(1.2,0)--(1.2,1.2)--(0,1.2) --cycle;
        \draw[xyplane=1.2] (0,0)--(1.2,0)--(1.2,1.2)--(0,1.2) --cycle;      
        \draw[xyplane=2] node[black] at (1,1){\(\boldsymbol{\mathcal{X}}\)};
        \begin{scope}[shift={(1.9 cm,0 cm)}]
            \draw[xyplane=2.2] node[black] at (0.5,1.5){\Large\(\boldsymbol{\cong}\)};
        \end{scope}
        
        \begin{scope}[shift={(2.3 cm,0 cm)}]
            \draw[xyplane=1] (0,0)--(0.15,0)--(0.15,1)--(0,1) --cycle;
            \draw[xyplane=1, top color = blue!30] (0.21,1)--(1.21,1)--(1.31,1.15)--(0.31,1.15) --cycle;
            \draw[xyplane=1, top color = green!30] (0.1,1.16)--(0.25,1.16)--(0.80,1.8)--(0.65,1.8) --cycle;
            \draw[xyplane=1] node[black] at (0.17,-0.2){\(\boldsymbol{\mathrm{u_1}}\)};
            \draw[xyplane=1] node[black] at (1.3,1.34){\(\boldsymbol{\mathrm{v_1}}\)};
            \draw[xyplane=1] node[black] at (0.8,2){\(\boldsymbol{\mathrm{w_1}}\)};
        \end{scope}
        
        \begin{scope}[shift={(3.8 cm,0 cm)}]
            \draw[xyplane=2] node[black] at (0.5,1.5){\Large\(+\)};
        \end{scope}
        \begin{scope}[shift={(4.2 cm,0 cm)}]
            \draw[xyplane=1] (0,0)--(0.15,0)--(0.15,1)--(0,1) --cycle;
            \draw[xyplane=1, top color = blue!30] (0.21,1)--(1.21,1)--(1.31,1.15)--(0.31,1.15) --cycle;
            \draw[xyplane=1, top color = green!30] (0.1,1.16)--(0.25,1.16)--(0.80,1.8)--(0.65,1.8) --cycle;
            \draw[xyplane=1] node[black] at (0.17,-0.2){\(\boldsymbol{\mathrm{u_2}}\)};
            \draw[xyplane=1] node[black] at (1.3,1.34){\(\boldsymbol{\mathrm{v_2}}\)};
            \draw[xyplane=1] node[black] at (0.8,2){\(\boldsymbol{\mathrm{w_2}}\)};
        \end{scope}
        
        \begin{scope}[shift={(6.3 cm,0 cm)}]
            \draw[xyplane=2] node[black] at (0.5,1.5){\Large\(+ \dots +\)};
        \end{scope}
         \begin{scope}[shift={(7.2 cm,0 cm)}]
            \draw[xyplane=1] (0,0)--(0.15,0)--(0.15,1)--(0,1) --cycle;
            \draw[xyplane=1, top color = blue!30] (0.21,1)--(1.21,1)--(1.31,1.15)--(0.31,1.15) --cycle;
            \draw[xyplane=1, top color = green!30] (0.1,1.16)--(0.25,1.16)--(0.80,1.8)--(0.65,1.8) --cycle;
            \draw[xyplane=1] node[black] at (0.17,-0.2){\(\boldsymbol{\mathrm{u_r}}\)};
            \draw[xyplane=1] node[black] at (1.3,1.34){\(\boldsymbol{\mathrm{v_r}}\)};
            \draw[xyplane=1] node[black] at (0.8,2){\(\boldsymbol{\mathrm{w_r}}\)};
        \end{scope}
        
    \end{tikzpicture}
\caption{CP Factorization.}
\label{fig_sim}
\end{figure}

The goal of CP tensor factorization is to approximate the input tensor and its objective function can be formulated as follows, 

\begin{figure}[!t]%
	\centering
	\includegraphics[width=13cm,height=8cm]{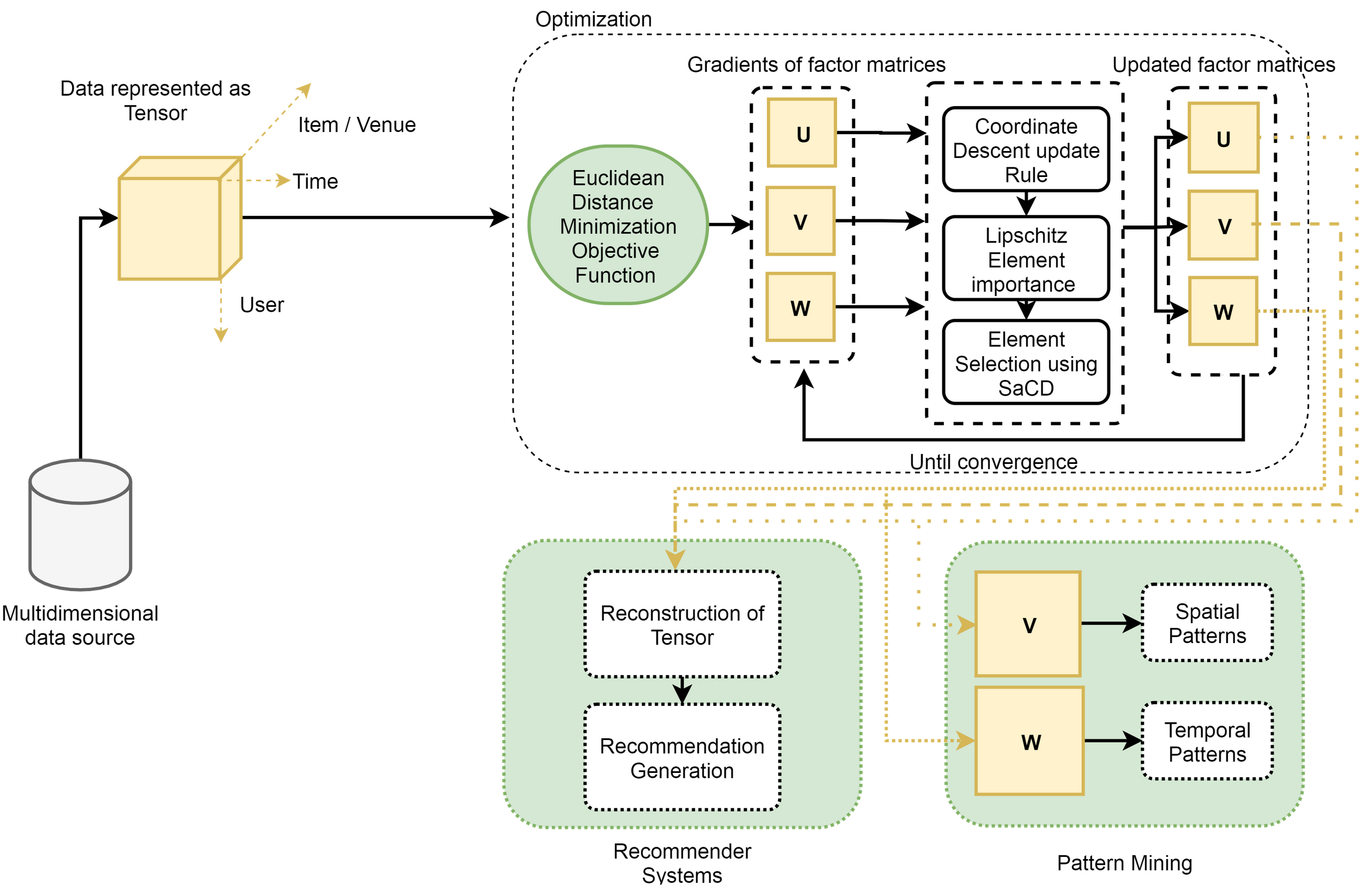}
	\caption{Architecture of the overall process}
	\label{arch}
\end{figure}
\begin{equation}
\label{eq_6}
\min_{\boldsymbol{\mathrm{U}}, \boldsymbol{\mathrm{V}}, \boldsymbol{\mathrm{W}}}f(\boldsymbol{\mathrm{U}}, \boldsymbol{\mathrm{V}}, \boldsymbol{\mathrm{W}}) = \norm{\boldsymbol{\mathcal{X}} - \llbracket\boldsymbol{\mathrm{U}}, \boldsymbol{\mathrm{V}}, \boldsymbol{\mathrm{W}} \rrbracket }^2.
\end{equation}

The learning process is modeled as the minimization of Euclidean loss function as shown in Equation~\eqref{eq_6}. The optimization process (Equation~\eqref{eq_7}) aims to learn the factors to reduce the difference between the original tensor and approximated tensor, where the predicted value \( \hat{x}_{qps}\) is calculated by the inner product of learned factors across all the features, as formulated in Equation~\eqref{eq_8} \cite{cichocki2009nonnegative}:
\begin{equation}
\label{eq_7}
f(\boldsymbol{\mathrm{U}}, \boldsymbol{\mathrm{V}}, \boldsymbol{\mathrm{W}}) = \sum_{q=1}^Q\sum_{p=1}^P\sum_{s=1}^S \norm{x_{qps}-\hat{x}_{qps}}^2.
\end{equation}
\begin{equation}
\label{eq_8}
\hat{x}_{qps} = \sum_{r=1}^R u_{qr}.v_{pr}.w_{sr}.
\end{equation}

\subsection{Nonnegative Tensor Factorization (NTF)}
The basic idea of NTF is factorizing the $n$-dimensional tensor into $n$ factor matrices that satisfy the nonnegative constraint~\cite{kolda2009tensor}. NTF can be achieved in traditional Tucker or CP factorization model by imposing constraints to maintain the nonnegative values. 

The objective function for NTF based on CP model can be formulated as follows:
\begin{equation}
\label{eq:ntf}
\min_{\boldsymbol{\mathrm{U}}, \boldsymbol{\mathrm{V}}, \boldsymbol{\mathrm{W}} \geq{0}}f(\boldsymbol{\mathrm{U}},\boldsymbol{\mathrm{V}},\boldsymbol{\mathrm{W}})  = \norm{\boldsymbol{\mathcal{X}}-\llbracket \boldsymbol{\mathrm{U}},\boldsymbol{\mathrm{V}},\boldsymbol{\mathrm{W}} \rrbracket}^2. 
\end{equation}

The approximated tensor, after factorization, using the learned factor matrices will be a denser model in which a large portion of absent values are populated \cite{nguyen2016fast,sra2008non}. For applications such as predictive modelling and recommender systems, these populated values can be inferred as potential prediction or recommendation \cite{ifada2014tensor,balasubramaniam2018people}.

The goal of the optimization problem in Equation~\eqref{eq:ntf} is to find the accurate factor matrices \(\boldsymbol{\mathrm{U}}\), \(\boldsymbol{\mathrm{V}}\) and \(\boldsymbol{\mathrm{W}}\). ALS~\cite{takane1977nonmetric} is the most common algorithm used to find the factor matrices~\cite{sidiropoulos2017tensor,phan2013fast}. It gives equal importance to all the elements and alternatively updates the entire factor matrix by fixing all the other factor matrices. This unnecessary element updates can cause slow convergence and poor scalability \cite{acar2011all,hsieh2011fast}. 

\section{Saturating Coordinate Descent Algorithm for NTF}

In this paper, we propose the Saturating Coordinate Descent (SaCD) algorithm to solve Equation~\eqref{eq:ntf} in order to reduce the complexity inherent in the factor matrix update and to overcome the poor scalability faced by traditional algorithms like ALS~\cite{phan2013fast}, APG~\cite{zhang2016fast}, GCD~\cite{hsieh2011fast}, CDTF~\cite{shin2017fully} and BCDP~\cite{xu2013block}. Figure~\ref{arch} provides the detail of the overall process. 

\subsection{Factor Matrix Update Rule}
We First explain the learning process of factor matrix \(\boldsymbol{\mathrm{U}}\) which is applicable to other factor matrices. The factor matrix \(\boldsymbol{\mathrm{U}}\) is updated by solving Equation~\eqref{eq_ucap} and to solve, we need to find the first order and second partial derivatives (Equation~\eqref{eq_9} and Equation~\eqref{eq_10} respectively) for \(f\) in Equation~\eqref{eq:ntf} as follows,

\begin{equation}
\label{eq_ucap}
 \min_{\boldsymbol{\mathrm{U}} \geq{0}}  \norm{\boldsymbol{\mathrm{X_1}} - \boldsymbol{\mathrm{U}}(\boldsymbol{\mathrm{W}} \odot \boldsymbol{\mathrm{V}})^T}^2,
\end{equation}
	
\begin{equation}
\label{eq_9}
\frac{\partial f}{\partial \boldsymbol{\mathrm{U}}} = \boldsymbol{\mathrm{G}} =  -\boldsymbol{\mathrm{X_1}}(\boldsymbol{\mathrm{W}} \odot \boldsymbol{\mathrm{V}}) + \boldsymbol{\mathrm{U}}(\boldsymbol{\mathrm{V}}^T\boldsymbol{\mathrm{V}} \ast \boldsymbol{\mathrm{W}}^T\boldsymbol{\mathrm{W}}),
\end{equation}

\begin{equation}
\label{eq_10}
\frac{\partial^2 f}{\partial \boldsymbol{\mathrm{U}}} = \boldsymbol{\mathrm{H}} =  \boldsymbol{\mathrm{V}}^T\boldsymbol{\mathrm{V}} \ast \boldsymbol{\mathrm{W}}^T\boldsymbol{\mathrm{W}},
\end{equation}
where $\frac{\partial f}{\partial \boldsymbol{\mathrm{U}}}$ and $\frac{\partial^2 f}{\partial \boldsymbol{\mathrm{U}}}$ are the first order and second order partial derivatives of the objective function $f$ (Equation~\eqref{eq:ntf}) with respect to \(\boldsymbol{\mathrm{U}}\) and let us denote them as $\boldsymbol{\mathrm{G}}$ (gradient) and $\boldsymbol{\mathrm{H}}$ (second order derivative) respectively.

The one variable gradient and second order derivative are computed as,
\begin{equation}
\label{eq_14}
g_{qr} = -(\boldsymbol{\mathrm{X_1}}(\boldsymbol{\mathrm{W}} \odot \boldsymbol{\mathrm{V}}))_{qr} + (\boldsymbol{\mathrm{U}}(\boldsymbol{\mathrm{V}}^T\boldsymbol{\mathrm{V}} \ast \boldsymbol{\mathrm{W}}^T\boldsymbol{\mathrm{W}}))_{qr},
\end{equation}
\begin{equation}
\label{eq_15}
h_{rr} = (\boldsymbol{\mathrm{V}}^T\boldsymbol{\mathrm{V}} \ast \boldsymbol{\mathrm{W}}^T\boldsymbol{\mathrm{W}})_{rr},
\end{equation}
where \(g_{qr}\) and \(h_{rr}\) represents \(q,r^{th}\) and \(r,r^{th}\) gradient \(\boldsymbol{\mathrm{G}}\) and second order derivative \(\boldsymbol{\mathrm{H}}\) respectively. 

Equation~\eqref{eq:ntf} becomes a quadratic equation in terms of the updated parameter when all other parameters are fixed. This leads to the following closed form CD update rule (i.e., one variable sub-problem) \cite{phan2013fast} for each parameter:

\begin{equation}
\label{eq_11}
\hat{u}_{qr} = \frac{g_{qr}}{h_{rr}}.
\end{equation}
The one variable sub-problem can be a simplified ALS update rule where each factor matrix is updated element-wise, thus it reduces the computational complexity. The nonnegative constraint can be added to Equation~\eqref{eq_11} as,
\begin{equation}
\label{eq_12}
\hat{u}_{qr} \xleftarrow{} \max(0,u_{qr} - \hat{u}_{qr})-u_{qr},
\end{equation}
where \(u_{qr}\)  indicates the \(q,r^{th}\) element of the factor matrix \(\boldsymbol{\mathrm{U}}\) and \(\hat{u}_{qr}\) indicates the computed element.

With the computed element value ($\hat{u}_{qr}$), the element \(u_{qr}\) can be updated as,
\begin{equation}
\label{eq_13}
u_{qr} \xleftarrow{} u_{qr} + \hat{u}_{qr}.
\end{equation}

Since the calculation of \(\boldsymbol{\mathrm{X_1}}(\boldsymbol{\mathrm{W}} \odot \boldsymbol{\mathrm{V}})\) and  \(\boldsymbol{\mathrm{V}}^T\boldsymbol{\mathrm{V}} \ast \boldsymbol{\mathrm{W}}^T\boldsymbol{\mathrm{W}}\) for every element is expensive, it is better to calculate this value only once at each iteration, instead of calculating it for every element update, during the factorization process to find the best learning factor matrices. 

In NMF, element selection has been proven to converge faster for updating the important elements repeatedly, instead of considering all elements \cite{hsieh2011fast}. The traditional measurement of element importance and update is computationally inefficient for NTF due to frequent gradient updates.  Next, we show how an element importance can be efficiently calculated and only important elements can be updated that can avoid frequent gradient updates.

\subsection{Proposed Lipschitz Element Importance}
The existing partial differential equations based optimization algorithms like APG, FHALS, FMU, and GCD are prone to slow convergence due to inconsistent gradients, i.e., gradients shift on both directions from the global or local minima in the non-convex optimization curve~\cite{khamaru2018convergence}. Additional regularity conditions has been incorporated to speed-up the convergence in NMF~\cite{guan2012nenmf}. In this paper, we propose to smooth (Lipschitz smoothness) the continuous function $f$ with a strong condition of Lipschitz continuity for fast convergence. 

\textbf{\textit{Definition 2} (Lipschitz continuity): }
Lipschitz continuity is a strong form of uniform continuity for a function if the function \(f\) is differentiable with Lipschitz constant \(L\).

The continuously differentiable function $f$ is called Lipschitz continuous with Lipschitz constant $L$, such that:
\begin{equation}
\label{eq_21}
\norm{f(u^k)-f(u^{k+1})}\leq L\norm{u^k -u^{k+1}}, \forall u^k \in \boldsymbol{\mathrm{U}}^k, u^{k+1} \in \boldsymbol{\mathrm{U}}^{k+1},
\end{equation}
where \(\boldsymbol{\mathrm{U}}^k\) and \(\boldsymbol{\mathrm{U}}^{k+1} \) are two factor matrices at two consecutive iterations.

\textbf{\textit{Definition 3} (Lipschitz smoothness): }
The Lipschitz continuous function, as defined in  Equation~\eqref{eq_21}, can be smoothed with the upper bound value for $L$. It ensures the strong convexity of the function and achieves faster convergence. This property is called Lipschitz smoothness and the function \(f\) with this property is called $L$-smoothed. It can be defined as,

\begin{equation}
\label{eq_lpzsmooth}
f(u^k) \geq f(u^{k+1}) + f'(u^{k+1})(u^k - u^{k+1}) + \frac{L}{2}\norm{f(u^k)-f(u^{k+1})}.
\end{equation}

The upper bound value for $L$ should be calculated to satisfy,
\begin{equation}
\label{eq_lpzsmoothsatisfy}
f''(\boldsymbol{\mathrm{U,V,W}}) - L\boldsymbol{\mathrm{I}} \geq{0},
\end{equation}
where $\boldsymbol{\mathrm{I}}$ is an Identity matrix.

Equations~\eqref{eq_9} and \eqref{eq_10} can reveal that the objective function \(f\) is differentiable with respect to factor matrices as well as it is non-convex. We propose to analyse these properties using Lipschitz continuity and Lipschitz smoothness and measure the element importance in order to achieve faster convergence.

Let \(\boldsymbol{\mathrm{Z}}\) be the element importance matrix that holds the importance value for each element in a factor matrix. The importance of element \(u_{qr}\) can be calculated as the difference in the objective function as,

\begin{equation}
\label{eq_16}
z_{qr} = g_{qr}-\hat{g}_{qr},
\end{equation}
where \(g_{qr}\) represents the gradient of \(u_{qr}\) and \(\hat{g}_{qr}\) represents the new gradient value of the computed element \(\hat{u}_{qr}\).

With the precomputed \(g_{qr}\) as per Equation~\eqref{eq_14}, we can compute \(\hat{g}_{qr}\) using one variable subproblem \cite{hsieh2011fast} as follows,
\begin{equation}
\label{eq_17}
\hat{g}_{qr} = g_{qr} + g_{qr}\hat{u}_{qr} + \frac{1}{2}(h_{rr} \hat{u}^2_{qr}).
\end{equation}

Substituting Equations~\eqref{eq_14} and \eqref{eq_17} in Equation~\eqref{eq_16}, we get the element importance as,
\begin{equation}
\label{eq_18}
z_{qr} = -(g_{qr}\hat{u}_{qr}) -0.5(h_{rr}\hat{u}^2_{qr}).
\end{equation}



In Equation~\eqref{eq_18}, \(h_{rr}\)  is the partial derivative of gradient \(g_{qr}\) as defined in Equation~\eqref{eq_15}. An objective function holding Lipschitz continuity with differentiable gradients can be Lipschitz-smoothed ($L$-smoothed) (as proven in Lemma \ref{lemma1}). This property will allow the function to converge faster by replacing the partial derivative of gradient (i.e. \(h_{rr}\) in~\eqref{eq_18}) with Lipschitz constant $L$ \cite{guan2012nenmf}. 

We propose to calculate the difference in the $L$-smoothed objective function for each element’s update during the factorization using Lipschitz continuity \cite{hager1979lipschitz} that can achieve faster convergence. The convergence of an optimization problem can be analyzed using curvature measure \(C_f\) that measures the deviation of the objective function $f$ with the linear approximation. The Lipschitz smoothness of $f$ has shown the best \(C_f\) with the improvement of convergence speed to \(\frac{1}{K^2}\) where \(K\) is the number of iterations~\cite{jaggi2011sparse} \footnote{For a formal analysis of curvature measure and Lipschitz continuity, we direct the readers to \cite{jaggi2011sparse}.}. We conjecture that use of Lipschitz continuity in the element importance calculation will speed up the convergence.

By applying the upper bound value of Lipschitz continuity in the one variable subproblem Equation~\eqref{eq_17}, we have:

\begin{equation}
\label{eq_19}
\hat{g}_{qr} = g_{qr} + g_{qr}\hat{u}_{qr}+\frac{L}{2}(\hat{u}^2_{qr}),
\end{equation}
where \(L\) is a positive scalar called Lipschitz constant.

With the redefined one variable subproblem, we can define Lipschitz element importance by substituting Equations~\eqref{eq_14} and \eqref{eq_19} in Equation~\eqref{eq_16} as,
\begin{equation}
\label{eq_20}
z_{qr} = -(g_{qr} \hat{u}_{qr}) -\frac{L}{2}(\hat{u}^2_{qr}).
\end{equation}

\begin{lemma}
\label{lemma1}
The gradient of the objective function Equation~\eqref{eq:ntf} satisfies Lipschitz continuity.
\end{lemma}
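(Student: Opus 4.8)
The plan is to exploit the fact that, although the objective $f$ of Equation~\eqref{eq:ntf} is jointly non-convex in $(\boldsymbol{\mathrm{U}},\boldsymbol{\mathrm{V}},\boldsymbol{\mathrm{W}})$, it is a \emph{quadratic} function of each factor matrix once the other two are held fixed; hence its block gradient is an \emph{affine} map of that matrix, and every affine map is Lipschitz continuous with a constant equal to the operator norm of its linear part. I would prove the statement for the $\boldsymbol{\mathrm{U}}$-block and observe that the $\boldsymbol{\mathrm{V}}$- and $\boldsymbol{\mathrm{W}}$-blocks follow verbatim by the symmetry of Equation~\eqref{eq:ntf} (with $\boldsymbol{\mathrm{V}}^T\boldsymbol{\mathrm{V}}\ast\boldsymbol{\mathrm{W}}^T\boldsymbol{\mathrm{W}}$ replaced by the analogous Hadamard products of Gram matrices).

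First I would fix $\boldsymbol{\mathrm{V}}$ and $\boldsymbol{\mathrm{W}}$ and, reading off Equations~\eqref{eq_9} and~\eqref{eq_10}, rewrite the gradient as the affine map
\[
\boldsymbol{\mathrm{G}}(\boldsymbol{\mathrm{U}}) \;=\; \boldsymbol{\mathrm{U}}\boldsymbol{\mathrm{H}} \;-\; \boldsymbol{\mathrm{X_1}}(\boldsymbol{\mathrm{W}}\odot\boldsymbol{\mathrm{V}}), \qquad \boldsymbol{\mathrm{H}}=\boldsymbol{\mathrm{V}}^T\boldsymbol{\mathrm{V}}\ast\boldsymbol{\mathrm{W}}^T\boldsymbol{\mathrm{W}},
\]
so that for two consecutive iterates $\boldsymbol{\mathrm{U}}^k,\boldsymbol{\mathrm{U}}^{k+1}$ the additive constant cancels and $\boldsymbol{\mathrm{G}}(\boldsymbol{\mathrm{U}}^k)-\boldsymbol{\mathrm{G}}(\boldsymbol{\mathrm{U}}^{k+1})=(\boldsymbol{\mathrm{U}}^k-\boldsymbol{\mathrm{U}}^{k+1})\boldsymbol{\mathrm{H}}$. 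Submultiplicativity of the Frobenius norm then gives
\[
\norm{\boldsymbol{\mathrm{G}}(\boldsymbol{\mathrm{U}}^k)-\boldsymbol{\mathrm{G}}(\boldsymbol{\mathrm{U}}^{k+1})} \;\le\; \norm{\boldsymbol{\mathrm{H}}}\,\norm{\boldsymbol{\mathrm{U}}^k-\boldsymbol{\mathrm{U}}^{k+1}},
\]
which is precisely the Lipschitz inequality of Definition~2 (Equation~\eqref{eq_21}) applied to the map $\boldsymbol{\mathrm{G}}=\partial f/\partial\boldsymbol{\mathrm{U}}$, with admissible constant $L=\norm{\boldsymbol{\mathrm{H}}}$; the sharpest choice is $L=\lambda_{\max}(\boldsymbol{\mathrm{H}})$, which is also the scalar that legitimately stands in for each $h_{rr}$ in Equations~\eqref{eq_19}--\eqref{eq_20}.

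The one step that needs genuine care, and really the only obstacle, is certifying that such an $L$ is finite and compatible with the smoothing requirement of Equation~\eqref{eq_lpzsmoothsatisfy}, i.e. that $\boldsymbol{\mathrm{H}}$ is a genuine positive semidefinite matrix with bounded spectrum. I would obtain this from the facts that $\boldsymbol{\mathrm{V}}^T\boldsymbol{\mathrm{V}}$ and $\boldsymbol{\mathrm{W}}^T\boldsymbol{\mathrm{W}}$ are Gram matrices (hence positive semidefinite) and that the Hadamard product of positive semidefinite matrices is positive semidefinite (Schur product theorem); its eigenvalues are therefore real, nonnegative, and bounded above (for instance by $\norm{\boldsymbol{\mathrm{V}}}^2\norm{\boldsymbol{\mathrm{W}}}^2$), so $L=\lambda_{\max}(\boldsymbol{\mathrm{H}})<\infty$ and $L\boldsymbol{\mathrm{I}}\succeq\boldsymbol{\mathrm{H}}$. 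A secondary point to state honestly is that this is a \emph{block}-wise Lipschitz property: the joint Hessian of $f$ in all three factors is indefinite, so there is no single global Lipschitz constant for $\nabla f$ as a function of $(\boldsymbol{\mathrm{U}},\boldsymbol{\mathrm{V}},\boldsymbol{\mathrm{W}})$ simultaneously, and the lemma — together with its subsequent use in the Lipschitz element-importance derivation — is to be read per factor matrix, with $L$ recomputed from the currently fixed factors.
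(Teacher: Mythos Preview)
Your proposal is correct and follows essentially the same route as the paper: fix $\boldsymbol{\mathrm{V}},\boldsymbol{\mathrm{W}}$, subtract the two block gradients so that the $-\boldsymbol{\mathrm{X_1}}(\boldsymbol{\mathrm{W}}\odot\boldsymbol{\mathrm{V}})$ term cancels, and read off $L=\norm{\boldsymbol{\mathrm{V}}^T\boldsymbol{\mathrm{V}}\ast\boldsymbol{\mathrm{W}}^T\boldsymbol{\mathrm{W}}}$. Your write-up is in fact more careful than the paper's --- you state the step as an inequality via submultiplicativity (the paper writes it as an equality), invoke the Schur product theorem to justify positive semidefiniteness of $\boldsymbol{\mathrm{H}}$, and make explicit that the constant is block-wise --- but these are refinements of the same argument, not a different one.
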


\begin{proof}
Supposedly we have two factor matrices \(\boldsymbol{\mathrm{U}}^k\) and \(\boldsymbol{\mathrm{U}}^{k+1} \) at two consecutive iterations.

\begin{multline}
\label{eq_22}
\norm{\frac{\partial f}{\partial \boldsymbol{\mathrm{U}}^{k-1}} - \frac{\partial f}{\partial \boldsymbol{\mathrm{U}}^{k}}} = \norm{-\boldsymbol{\mathrm{X_1}}(\boldsymbol{\mathrm{W}}\odot\boldsymbol{\mathrm{V}})+\boldsymbol{\mathrm{U}}^{k-1}(\boldsymbol{\mathrm{V}}^T\boldsymbol{\mathrm{V}} \ast \boldsymbol{\mathrm{W}}^T\boldsymbol{\mathrm{W}}) 
-(-\boldsymbol{\mathrm{X_1}}(\boldsymbol{\mathrm{W}}\odot\boldsymbol{\mathrm{V}})+\boldsymbol{\mathrm{U}}^{k}(\boldsymbol{\mathrm{V}}^T\boldsymbol{\mathrm{V}} \ast \boldsymbol{\mathrm{W}}^T\boldsymbol{\mathrm{W}}))}.
\end{multline}

Applying \textit{Definition 2} and assuming the objective function is differentiable, we obtain,

\begin{equation}
\label{eq_23}
\norm{\frac{\partial f}{\partial \boldsymbol{\mathrm{U}}^{k-1}} - \frac{\partial f}{\partial \boldsymbol{\mathrm{U}}^{k}}} = L \norm{\boldsymbol{\mathrm{U}}^{k-1} - \boldsymbol{\mathrm{U}}^k}.
\end{equation}

Equating Equations~\eqref{eq_22} and \eqref{eq_23}, we can identify the value for \(L\) as \(\norm{\boldsymbol{\mathrm{V}}^T\boldsymbol{\mathrm{V}} \ast \boldsymbol{\mathrm{W}}^T\boldsymbol{\mathrm{W}}}\). \(L\) is the singular upper bound value that defines the maximum curves an objective function allowed to have and making the function \(f\) \(L\)-smoothed for faster convergence.

Solving Equation~\eqref{eq_22} and Equation~\eqref{eq_23} for one variable subproblem with \(u^k \in \boldsymbol{\mathrm{U}}^k\) and \(u^{k+1} \in \boldsymbol{\mathrm{U}}^{k+1}\)  proves lemma \ref{lemma1}.
\end{proof}

\subsection{SaCD (Saturating Coordinate Descent)}
Once we have the element importance calculated for all the elements in the factor matrix $\boldsymbol{\mathrm{U}}$ using Equation~\eqref{eq_20} and stored them in $\boldsymbol{\mathrm{Z}}$, we now select a set of elements for every iteration using the proposed SaCD algorithm. The state-of-the-art GCD NMF algorithm uses greedy strategy in finding and updating single element multiple times in each row~\cite{hsieh2011fast}. This requires a very expensive frequent gradient update. In the minimization optimization problem as formulated in Equation~\eqref{eq:ntf}, the error is minimized for each iteration. It slowly reaches convergence, or reaches a saturation point beyond which updating will not minimize the objective function $f$. Hence, the contribution of each element in minimizing the objective function decreases with each iteration. Moreover, not all the elements will effectively minimize the objective function. It is sufficient and efficient to update every single element until it reaches the saturation point, instead of updating it for all the iterations. This avoids the expensive frequent gradient update as proven in Lemma \ref{lemma2}.

As the element importance \(z_{qr}\) is the difference in the objective function as per Equation~\eqref{eq_16}, we can identify the saturation point \(sp_{qr}\) of each element for every iteration $k$, by keeping track of the previous value of \(z_{qr}^{k-1}\) as, 
\begin{equation}
\label{eq_24}
sp_{qr} = (z^k_{qr}-z^{k-1}_{qr}).
\end{equation}

Additionally, for each iteration, we measure the total importance of a factor matrix as,
\begin{equation}
\label{eq_25}
ti^k = \sum_{q=1}^{Q}\sum_{r=1}^{R}z^k_{qr},
\end{equation}
\begin{equation}
\label{eq_26}
ti^{k-1} = \sum_{q=1}^{Q}\sum_{r=1}^{R}z^{k-1}_{qr}.
\end{equation}
If the total importance of current iteration \(ti^k\)  is more than the previous iteration \(ti^{(k-1)}\), we identify the saturation point \(sp_{qr}\) as,
\begin{equation}
\label{eq_27}
sp_{qr} = (z_{qr}^{k-1}-z_{qr}^k).
\end{equation}

While the difference in the objective function gradually decreases, sometimes it increases. This happens when the optimization is reaching local or global minima. To avoid stuck in local minima, this redefinition of the saturation point is needed that further allows important elements to be updated until global minima is reached. 
We use this saturation point to decide if the element \(u_{qr}\) is to be updated as per Equation~\eqref{eq_13}. If \(sp_{qr} < 0\), we avoid updating that element. 

We have explained the proposed algorithm by describing the learning process of factor matrix \(\boldsymbol{\mathrm{U}}\). Next we briefly show that the process is applicable to other factor matrices too.

\begin{lemma}
	\label{lemma2}
	In the SaCD element selection, for each update of \(u_{qr}\), it is not necessary to update gradient of all the columns of \(q^{th}\) row and element importance \(z_{qr}\). 
\end{lemma}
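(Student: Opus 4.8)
The plan is to exhibit the exact dependency structure of the quantities that enter an element update, and then read off which of them are perturbed when a single $u_{qr}$ is updated. Write the gradient entry (Equation~\eqref{eq_14}) as
\begin{equation*}
g_{qr} = c_{qr} + \sum_{r'=1}^{R} u_{qr'}\, h_{r'r}, \qquad c_{qr} := -\bigl(\boldsymbol{\mathrm{X_1}}(\boldsymbol{\mathrm{W}}\odot\boldsymbol{\mathrm{V}})\bigr)_{qr}, \quad h_{r'r} := (\boldsymbol{\mathrm{V}}^T\boldsymbol{\mathrm{V}}\ast\boldsymbol{\mathrm{W}}^T\boldsymbol{\mathrm{W}})_{r'r}.
\end{equation*}
While $\boldsymbol{\mathrm{U}}$ is being updated, both $c_{qr}$ and the matrix $\boldsymbol{\mathrm{H}}$ (hence $h_{rr}$ and the Lipschitz constant $L=\norm{\boldsymbol{\mathrm{V}}^T\boldsymbol{\mathrm{V}}\ast\boldsymbol{\mathrm{W}}^T\boldsymbol{\mathrm{W}}}$ of Lemma~\ref{lemma1}) are constants. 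Consequently the CD step $\hat u_{qr}=g_{qr}/h_{rr}$ (Equation~\eqref{eq_11}), its nonnegative projection (Equation~\eqref{eq_12}), and the Lipschitz importance $z_{qr}=-(g_{qr}\hat u_{qr})-\frac{L}{2}\hat u_{qr}^2$ (Equation~\eqref{eq_20}) depend on $\boldsymbol{\mathrm{U}}$ only through $g_{qr}$, and $g_{qr}$ depends on $\boldsymbol{\mathrm{U}}$ only through the $q$-th row. Hence updating $u_{q'r}$ for any $q'\neq q$ changes neither $g_{qr}$, nor $\hat u_{qr}$, nor $z_{qr}$.

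Next I would invoke the column-wise schedule of SaCD. Within the processing of a fixed column $r$, the algorithm only ever updates entries $u_{qr}$ for varying $q$; by the previous paragraph any two such updates act on disjoint rows and therefore leave one another's gradient entries $g_{\cdot r}$ and importances $z_{\cdot r}$ untouched. Thus the whole column of gradients used while column $r$ is processed is exactly the one produced by the single batch evaluation of $\boldsymbol{\mathrm{G}}$ performed at the start of the iteration (the ``compute once per iteration'' step noted after Equation~\eqref{eq_15}); no intra-column refresh is ever required. The only entries that an update of $u_{qr}$ does perturb are $g_{qr'}$ for $r'\neq r$ --- that is, precisely ``all the columns of the $q$-th row'' named in the statement. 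These are not consumed again before the end of the iteration, and at the next iteration $\boldsymbol{\mathrm{G}}$, $\boldsymbol{\mathrm{H}}$ and $L$ are recomputed from scratch, so no stale value ever propagates; refreshing them after each $u_{qr}$ would merely duplicate work the next batch evaluation performs in bulk.

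It remains to see that $z_{qr}$ likewise need not be refreshed after the update. The only role of $z_{qr}$ downstream is the saturation test: by Equations~\eqref{eq_24} and \eqref{eq_27} the decision to update $u_{qr}$ compares $z^k_{qr}$ with the stored $z^{k-1}_{qr}$, and the corrective rule uses the aggregates $ti^k$ versus $ti^{k-1}$ of Equations~\eqref{eq_25}--\eqref{eq_26}; all of these are comparisons between the current and the previous iteration, with $z^k_{qr}$ read off the batch matrix $\boldsymbol{\mathrm{Z}}$ built once per iteration from $\boldsymbol{\mathrm{G}}$. No such comparison consumes a within-iteration-updated importance, and Lemma~\ref{lemma1} guarantees that this batch $z_{qr}$ (with $h_{rr}$ replaced by its upper bound $L$) is a valid estimate of the per-element decrease of the $L$-smoothed objective, so deferring its update does not compromise monotone descent. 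Combining the row-locality of $g_{qr}$, the column-wise update order, the once-per-iteration batch recomputation, and the inter-iteration nature of the saturation test proves the lemma.

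The step I expect to be the main obstacle is the cross-column bookkeeping inside one iteration: an update in column $r$ does make $g_{\cdot,r'}$ stale for columns $r'>r$ still to be processed, so I must argue that this residual staleness is immaterial rather than a defect. I would handle it by emphasising that SaCD, by construction, treats $\boldsymbol{\mathrm{G}}$ as frozen for the whole iteration and restores exactness through the single batch recompute at the start of the next one, while the saturation-point redefinition in Equation~\eqref{eq_27} is exactly what absorbs the resulting non-monotone importance values --- so ``no per-update refresh'' is consistent with the algorithm as specified, which is all that Lemma~\ref{lemma2} asserts.
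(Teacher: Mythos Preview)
Your proposal is correct and follows essentially the same approach as the paper: both arguments hinge on (i) the column-wise selection set depending only on the $r$-th column of $\boldsymbol{\mathrm{G}}$, and (ii) the row-locality $g_{qr}\perp u_{nr}$ for $n\neq q$, from which the non-necessity of refreshing $\boldsymbol{\mathrm{g}}_{q\ast}$ and $z_{qr}$ follows. Your version is more explicit --- you write out the decomposition $g_{qr}=c_{qr}+\sum_{r'} u_{qr'}h_{r'r}$ and you surface (and resolve) the cross-column staleness issue that the paper's proof simply elides --- but the underlying idea is the same.
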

\begin{proof}
	Let $u_{qr} \in \boldsymbol{\mathrm{U}}$ be $q,r^{th}$ element of $\boldsymbol{\mathrm{U}}$ and $g_{qr} \in \boldsymbol{\mathrm{G}}$ represents $q,r^{th}$ gradient of $\boldsymbol{\mathrm{U}}$.
	
	For every $r:R$, the set of important elements $\boldsymbol{e_r}$  are selected based on the saturation point calculated using Equation~\eqref{eq_24} or Equation~\eqref{eq_27}. 
	
	The subset $\boldsymbol{e_r}$ is dependent on $r^{th}$ column of $\boldsymbol{\mathrm{G}}$. 
	
	For each update of $u_{qr} \in \boldsymbol{e_r}, g_{qr} \in \boldsymbol{\mathrm{G}}$ alone needs to be updated as $\boldsymbol{e_r} \independent{} \boldsymbol{\mathrm{g}}_{y}$ where $y\neq r$ and $g_{qr} \independent{} u_{nr}$ where $n\neq q$ and $\independent{}$ indicates that $e_r$ is independent of $\boldsymbol{\mathrm{g}}_{y}$.
	
	
	Therefore, for each update of $u_{qr}$, it is enough to update $g_{qr}$ alone and it is not necessary to update $\boldsymbol{\mathrm{g}}_{q \ast}$ of all the columns of $q^{th}$ row and element importance $z_{qr}$. 
	
\end{proof}

\subsubsection*{Updating solution for factor matrix $\boldsymbol{\mathrm{V}}$:}

Taking the first and second order partial derivatives of the function $f$ with respect to $\boldsymbol{\mathrm{V}}$, we have a new solution for the gradient $\boldsymbol{\mathrm{G}}$ and second order derivative $\boldsymbol{\mathrm{H}}$ as,

\begin{equation}
\label{eq_pdei}
\frac{\partial f}{\partial \boldsymbol{\mathrm{V}}} = \boldsymbol{\mathrm{G}} =  -\boldsymbol{\mathrm{X_2}}(\boldsymbol{\mathrm{W}} \odot \boldsymbol{\mathrm{U}}) + \boldsymbol{\mathrm{V}}(\boldsymbol{\mathrm{U}}^T\boldsymbol{\mathrm{U}} \ast \boldsymbol{\mathrm{W}}^T\boldsymbol{\mathrm{W}}),
\end{equation}

\begin{equation}
\label{eq_ppdei}
\frac{\partial^2 f}{\partial \boldsymbol{\mathrm{V}}} = \boldsymbol{\mathrm{H}} =  \boldsymbol{\mathrm{U}}^T\boldsymbol{\mathrm{U}} \ast \boldsymbol{\mathrm{W}}^T\boldsymbol{\mathrm{W}}.
\end{equation}

With the updated $\boldsymbol{\mathrm{G}}$ and $\boldsymbol{\mathrm{H}}$ as per Equations~(\ref{eq_pdei}) and~(\ref{eq_ppdei}), the element importance is calculated according to Section 4.1 and the factor matrix update is performed in similar fashion.

\subsubsection*{Updating solution for factor matrix $\boldsymbol{\mathrm{W}}$:}

Taking the first and second order partial derivatives of the function $f$ with respect to $\boldsymbol{\mathrm{W}}$, we have a new solution for the gradient $\boldsymbol{\mathrm{G}}$ and second order derivative $\boldsymbol{\mathrm{H}}$ as,

\begin{equation}
\label{eq_pdet}
\frac{\partial f}{\partial \boldsymbol{\mathrm{W}}} = \boldsymbol{\mathrm{G}} =  -\boldsymbol{\mathrm{X_3}}(\boldsymbol{\mathrm{V}} \odot \boldsymbol{\mathrm{U}}) + \boldsymbol{\mathrm{W}}(\boldsymbol{\mathrm{U}}^T\boldsymbol{\mathrm{U}} \ast \boldsymbol{\mathrm{V}}^T\boldsymbol{\mathrm{V}}),
\end{equation}

\begin{equation}
\label{eq_ppdet}
\frac{\partial^2 f}{\partial \boldsymbol{\mathrm{W}}} = \boldsymbol{\mathrm{H}} =  \boldsymbol{\mathrm{U}}^T\boldsymbol{\mathrm{U}} \ast \boldsymbol{\mathrm{V}}^T\boldsymbol{\mathrm{V}}.
\end{equation}

With the updated $\boldsymbol{\mathrm{G}}$ and $\boldsymbol{\mathrm{H}}$ as per Equations~(\ref{eq_pdet}) and~(\ref{eq_ppdet}), the element importance is calculated according to Section 4.1 and the factor matrix update is performed in similar fashion. 

Algorithm~\ref{alg:one} details the process. 
\begin{algorithm}
	\SetAlgoLined
	\textbf{Input:} Tensor \( \boldsymbol{\mathcal{X}} \); Randomly Initialized factor matrices \(\boldsymbol{\mathrm{U}} \in \R^{Q \times R}\), \(\boldsymbol{\mathrm{V}} \in \R^{P \times R}\), \(\boldsymbol{\mathrm{W}} \in \R^{S \times R}\); Rank \( R \); \(\boldsymbol{\mathrm{Z}} = \varnothing\); Number of rows in any given factor matrix $rows$; Maximum number of iterations $maxiters$. \\
	\textbf{Output:} Learned Factor matrices \(\boldsymbol{\mathrm{U}}\), \(\boldsymbol{\mathrm{V}}\),\(\boldsymbol{\mathrm{W}}\) \\
	\For{$k = 1: maxiters$}{
		compute \( \boldsymbol{\mathrm{G}} \) and \( \boldsymbol{\mathrm{H}} \) using Equations~\eqref{eq_9} and \eqref{eq_10}; \(L = \norm{\boldsymbol{\mathrm{H}} }\); \\
		\uIf{$k$ == 1}{
			\For{\(r = 1:R\)}
			{
				\For{\(q = 1: rows\)}
				{
					compute element importance $z_{qr}$ using Equation \eqref{eq_20}; \\
					store the initial element importance, $z_{qr}^{k-1} = z_{qr}$; \\
					\If{$z_{qr} > 0$}
					{
						update the element $u_{qr}$ using Equation \eqref{eq_13};
					}
				}
				
			}
		}
		\Else{
			\For{\(r = 1:R\)}{
				\For{$q = 1: rows$}{
					compute element importance $z_{qr}$ using Equation \eqref{eq_20}; \\
					identify	the saturation point~$sp_{qr}$ using Equation \eqref{eq_24} or Equation \eqref{eq_27}; \\
					\If{$sp_{qr}>0$}{
						update the element	$u_{qr}$ using Equation~\eqref{eq_13};
					}
					update the previous element importance, $z_{qr}^{k-1} = z_{qr}$;
				}
			}
		}
		repeat analogues lines 4 to 26  with $\boldsymbol{\mathrm{G}}$ and $\boldsymbol{\mathrm{H}}$  calculated using Equations \eqref{eq_pdei} and \eqref{eq_ppdei} respectively to update elements of $\boldsymbol{\mathrm{V}}$ in lines 11 and 21; \\
		repeat analogues lines 4 to 26  with $\boldsymbol{\mathrm{G}}$ and $\boldsymbol{\mathrm{H}}$ calculated using Equations \eqref{eq_pdet} and \eqref{eq_ppdet} respectively to update elements of $\boldsymbol{\mathrm{W}}$ in lines 11 and 21; \\
	}
	\caption{Saturating Coordinate Descent (SaCD) Algorithm}
	\label{alg:one}
\end{algorithm}

\subsection{Fast Saturating Coordinate Descent (FSaCD) using Parallelization}
In this section we propose the Fast SaCD algorithm that leverages the column-wise element update to speed up the factorization process of SaCD. Instead of pre-computing the gradient using Equation~\eqref{eq_9}, we propose to calculate the gradient column-wise during the column-wise element update. To further improve the performance of SaCD, we utilize the multiple cores of the single machine. 

The pre-calculation of $\boldsymbol{\mathrm{G}}$ as per Equation \eqref{eq_9} consists of an expensive $mttkrp$ operation (i.e., $\boldsymbol{\mathrm{X_1}}(\boldsymbol{\mathrm{V}} \odot \boldsymbol{\mathrm{W}})$), a complex step in factorization that causes the Intermediate Data Explosion (IDE). IDE is caused due to the materialization and storage of the intermediate data ($\boldsymbol{\mathrm{V}} \odot \boldsymbol{\mathrm{W}}$). The calculation of $mttkrp$ is a rather infamous computational kernel with a lot of related work attempting to optimize the computations \cite{smith2015splatt,choi2014dfacto,bader2007efficient}. We utilize the concept of sparse tensor times vector product ($sttvp$) which redesigns the NTF algorithm to calculate the column-wise $mttkrp$ \cite{TTB_Software}. $sttvp$ simplifies $mttkrp$ by multiplying sparse tensor to a vector instead of multiplying it to a matrix and minimizes the IDE. We call this Fast SaCD algorithm as FSaCD. We propose to calculate the element importance and $mttkrp$ column-wise and parallelize the calculation together. 

Based on Algorithm~\ref{alg:one}, we calculate the element importance column-wise so that only the respective column of gradients is needed. It enables us to calculate the $mttkrp$ of only one column at a time as follows, 

For simplicity, let us represent $mttkrp$ as,
\begin{equation}
\boldsymbol{\mathrm{M}} = \boldsymbol{\mathrm{X_1}}(\boldsymbol{\mathrm{W}} \odot \boldsymbol{\mathrm{V}}).
\end{equation}

The $mttkrp$ operation is simplified using the sparseness of tensor and the column-wise $mttkrp$ ($\boldsymbol{\mathrm{M}}$) can be calculated as,

\begin{equation}
\boldsymbol{\mathrm{m}}_r = \sum_{(q,p,s) \in \Omega^U_q}(x_{qps}(\boldsymbol{\mathrm{w}}_r \boldsymbol{\mathrm{v}}_r)),
\end{equation}
where $\Omega^U_q$ indicates a subset of $\Omega$ whose mode $U$'s index is $q$. $\boldsymbol{\mathrm{v}}_r$ and $\boldsymbol{\mathrm{w}}_r$ indicates the $r^{th}$ column of the factor matrices $\boldsymbol{\mathrm{V}}$ and $\boldsymbol{\mathrm{W}}$ respectively.
 
Now, the column-wise gradient to solve $\boldsymbol{\mathrm{U}}$ is calculated as,
\begin{equation}
\label{eq_colgrad}
\boldsymbol{\mathrm{g}}_r = \boldsymbol{\mathrm{m}}_r + (\boldsymbol{\mathrm{U}}(\boldsymbol{\mathrm{V}}^T\boldsymbol{\mathrm{V}} \ast \boldsymbol{\mathrm{W}}^T\boldsymbol{\mathrm{W}}))_r.
\end{equation}

The column-wise element importance is calculated as,
\begin{equation}
\label{eq_coleq}
\boldsymbol{\mathrm{z}}_r = -(\boldsymbol{\mathrm{g}}_r \ast \boldsymbol{\mathrm{\hat{u}}}_r) -\frac{L}{2}(\boldsymbol{\mathrm{\hat{u}}}_r\ast\boldsymbol{\mathrm{\hat{u}}}_r).
\end{equation}

With the calculated gradient (Equation~\eqref{eq_colgrad}) and element importance (Equation~\eqref{eq_coleq}), the column-wise update is performed as,

\begin{equation}
\label{eq_upur}
\boldsymbol{\mathrm{u}}_{r} \leftarrow \boldsymbol{\mathrm{u}}_{r}+ \max(0,\frac{\boldsymbol{\mathrm{g}}_{r}}{\boldsymbol{\mathrm{h}}_{rr}}),
\end{equation}
where \(\boldsymbol{\mathrm{u}}_{r}\) indicates the \(r^{th}\) column of the factor matrix \(\boldsymbol{\mathrm{U}}\) and $\boldsymbol{\mathrm{h}}_{rr} = (\boldsymbol{\mathrm{V}}^T\boldsymbol{\mathrm{V}} \ast \boldsymbol{\mathrm{W}}^T\boldsymbol{\mathrm{W}})_{rr}$.

The above process shows the learning of factor matrix \(\boldsymbol{\mathrm{U}}\). Next we briefly show that the process is applicable to other factor matrices $\boldsymbol{\mathrm{V}}$ and $\boldsymbol{\mathrm{W}}$ too. 

\subsubsection*{Updating solution for factor matrix $\boldsymbol{\mathrm{V}}$:}

The $mttkrp$ ($\boldsymbol{\mathrm{M}} = \boldsymbol{\mathrm{X_2}}(\boldsymbol{\mathrm{W}} \odot \boldsymbol{\mathrm{U}})$) operation in Equation~\eqref{eq_pdei} can be calculated column-wise as,

\begin{equation}
\boldsymbol{\mathrm{m}}_r = \sum_{(q,p,s) \in \Omega^V_p}(x_{qps}(\boldsymbol{\mathrm{u}}_r \boldsymbol{\mathrm{w}}_r)).
\end{equation}

Now, the column-wise gradient to solve $\boldsymbol{\mathrm{V}}$ is computed as,
\begin{equation}
\label{eq_colgradI}
\boldsymbol{\mathrm{g}}_r = \boldsymbol{\mathrm{m}}_r + (\boldsymbol{\mathrm{V}}(\boldsymbol{\mathrm{U}}^T\boldsymbol{\mathrm{U}} \ast \boldsymbol{\mathrm{W}}^T\boldsymbol{\mathrm{W}}))_r.
\end{equation}

The column-wise element importance is calculated as,
\begin{equation}
\label{eq_coleqI}
\boldsymbol{\mathrm{z}}_r = -(\boldsymbol{\mathrm{g}}_r \ast \boldsymbol{\mathrm{\hat{v}}}_r) -\frac{L}{2}(\boldsymbol{\mathrm{\hat{v}}}_r\ast\boldsymbol{\mathrm{\hat{v}}}_r).
\end{equation}

With the calculated gradient (Equation~\eqref{eq_colgradI}) and element importance (Equation~\eqref{eq_coleqI}), the column-wise update is performed as,
\begin{equation}
\label{eq_upir}
\boldsymbol{\mathrm{v}}_{r} \leftarrow \boldsymbol{\mathrm{v}}_{r}+\max(0,\frac{\boldsymbol{\mathrm{g}}_{r}}{\boldsymbol{\mathrm{h}}_{rr}}),
\end{equation}
where \(\boldsymbol{\mathrm{v}}_{r}\) indicates the \(r^{th}\) column of the factor matrix \(\boldsymbol{\mathrm{V}}\) and $\boldsymbol{\mathrm{h}}_{rr} = (\boldsymbol{\mathrm{U}}^T\boldsymbol{\mathrm{U}} \ast \boldsymbol{\mathrm{W}}^T\boldsymbol{\mathrm{W}})_{rr}$.

\subsubsection*{Updating solution for factor matrix $\boldsymbol{\mathrm{W}}$:}

The $mttkrp$ ($\boldsymbol{\mathrm{M}} = \boldsymbol{\mathrm{X_3}}(\boldsymbol{\mathrm{V}} \odot \boldsymbol{\mathrm{U}})$) operation in Equation~\eqref{eq_pdet} can be calculated column-wise as,

\begin{equation}
\boldsymbol{\mathrm{m}}_r = \sum_{(q,p,s) \in \Omega^W_s}(x_{qps}(\boldsymbol{v}_r \boldsymbol{u}_r)).
\end{equation}

Now, the column-wise gradient to solve $\boldsymbol{\mathrm{W}}$ is computed as,

\begin{equation}
\label{eq_colgradT}
\boldsymbol{\mathrm{g}}_r = \boldsymbol{\mathrm{m}}_r + (\boldsymbol{\mathrm{W}}(\boldsymbol{\mathrm{U}}^T\boldsymbol{\mathrm{U}} \ast \boldsymbol{\mathrm{V}}^T\boldsymbol{\mathrm{V}}))_r.
\end{equation}

The column-wise element importance is calculated as,
\begin{equation}
\label{eq_coleqT}
\boldsymbol{\mathrm{z}}_r = -(\boldsymbol{\mathrm{g}}_r \ast \boldsymbol{\mathrm{\hat{w}}}_r) -\frac{L}{2}(\boldsymbol{\mathrm{\hat{w}}}_r\ast\boldsymbol{\mathrm{\hat{w}}}_r).
\end{equation}

With the calculated gradient (Equation~\eqref{eq_colgradT}) and element importance (Equation~\eqref{eq_coleqT}), the column-wise update is performed as,
\begin{equation}
\label{eq_uptr}
\boldsymbol{\mathrm{w}}_{r} \leftarrow \boldsymbol{\mathrm{w}}_{r}+\max(0,\frac{\boldsymbol{\mathrm{g}}_{r}}{\boldsymbol{\mathrm{h}}_{rr}}),
\end{equation}
where \(\boldsymbol{\mathrm{w}}_{r}\) indicates the \(r^{th}\) column of the factor matrix \(\boldsymbol{\mathrm{W}}\) and $\boldsymbol{\mathrm{h}}_{rr} = (\boldsymbol{\mathrm{U}}^T\boldsymbol{\mathrm{U}} \ast \boldsymbol{\mathrm{V}}^T\boldsymbol{\mathrm{V}})_{rr}$.

With the column-wise gradients calculated as per Equations~\eqref{eq_colgrad}, \eqref{eq_colgradI}, and \eqref{eq_colgradT}, and the column-wise element importance calculated as per Equations~\eqref{eq_coleq}, \eqref{eq_coleqI}, and \eqref{eq_coleqT}, the factor matrix update is performed in parallel as detailed in the Algorithm 2.

\begin{algorithm}
	\SetAlgoLined
	\textbf{Input:} Tensor \( \boldsymbol{\mathcal{X}} \); Randomly Initialized factor matrices \(\boldsymbol{\mathrm{U}} \in \R^{Q \times R}\), \(\boldsymbol{\mathrm{V}} \in \R^{P \times R}\), \(\boldsymbol{\mathrm{W}} \in \R^{S \times R}\); Rank \( R \); \(\boldsymbol{\mathrm{Z}} = \varnothing\); Number of rows in any given factor matrix $rows$; Maximum number of iterations $maxiters$. \\
	\textbf{Output:} Learned Factor matrices \(\boldsymbol{\mathrm{U}}\), \(\boldsymbol{\mathrm{V}}\),\(\boldsymbol{\mathrm{W}}\) \\
	\For{$k = 1:maxiters$}{
		compute \( \boldsymbol{\mathrm{H}} \) using Equation~\eqref{eq_10}; \(L = \norm{\boldsymbol{\mathrm{H}} }\); \\
		\uIf{$k$ == 1}{
			\textbf{Parallel} \For{\(r = 1:R\)}{
				compute gradient $\boldsymbol{\mathrm{g}}_r$ using Equation~\eqref{eq_colgrad}; \\
				compute element importance $\boldsymbol{\mathrm{z}}_r$ using Equation~\eqref{eq_coleq}; \\
				store the initial element importance, $\boldsymbol{\mathrm{z}}_{r}^{k-1} \leftarrow \boldsymbol{\mathrm{z}}_{r}$; \\
				\If{\(\boldsymbol{\mathrm{z}}_{r} > 0\)}{
					update $\boldsymbol{\mathrm{u}}_r$ using Equation~\eqref{eq_upur};
				}
				
			}
		}
		\Else{
			\textbf{Parallel} \For{\(r = 1:R\)}{
				compute gradient $\boldsymbol{\mathrm{g}}_r$ using Equation~\eqref{eq_colgrad}; \\
				compute element importance $\boldsymbol{\mathrm{z}}_r$ using Equation~\eqref{eq_coleq}; \\
				\For{$q = 1: rows$}{
					identify the saturation point~$sp_{qr}$ using Equation~\eqref{eq_24} or Equation~\eqref{eq_27}; \\
					\If{$sp_{qr}>0$}{
					update the element	$u_{qr}$ using Equation~\eqref{eq_13};
					}
					update the previous element importance, $z_{qr}^{k-1} = z_{qr}$;
				}
			}
		}
	repeat analogues lines 4 to 26 with $\boldsymbol{\mathrm{H}}$, $\boldsymbol{\mathrm{g}}_r$ and $\boldsymbol{\mathrm{z}}_r$ calculated using \eqref{eq_ppdei}, \eqref{eq_colgradI} and \eqref{eq_coleqI} respectively to update the elements of $\boldsymbol{\mathrm{V}}$ in lines 11 and 21; \\
	repeat analogues lines 4 to 26 with $\boldsymbol{\mathrm{H}}$, $\boldsymbol{\mathrm{g}}_r$ and $\boldsymbol{\mathrm{z}}_r$ calculated using \eqref{eq_ppdet}, \eqref{eq_colgradT} and \eqref{eq_coleqT} respectively to update the elements of $\boldsymbol{\mathrm{W}}$ in lines 11 and 21; \\
	}
\caption{Fast Saturating Coordinate Descent (FSaCD) Algorithm}
\label{alg:two}
\end{algorithm}

\section{Theoretical Analysis}

We analyze SaCD in terms of convergence, time complexity, and memory requirement. We use the following symbols in the analysis: $R$ (rank), $K$ (maximum number of iterations), $M$ (number of factor matrices, and a three-mode tensor $\boldsymbol{\mathcal{X}} \in \R^{(Q\times Q \times Q)}$.

\subsection{Convergence Analysis}
In this section, we analyze the convergence of SaCD under the following assumptions.

\textbf{\textit{Assumption 1}}. The objective function $f$ w.r.t each factor matrix $\nabla f(u^k)$ is continuous, differentiable and holds the Lipschitz continuity.

\textbf{\textit{Assumption 2}}. Each element importance is calculated by Equation~\eqref{eq_20} for all iterations $K$ and the parameter $L^{(k-1)}$ obeys $l\leq L^{(k-1)}\leq L$.

\begin{lemma}
\label{lemma3}
Based on the $Assumptions$ $1$ and $2$, for given $K$ iterations, $\sum_{k=1}^\infty \norm{u^k - u^{k+1}}^2 < \infty$.
\end{lemma}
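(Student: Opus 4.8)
The plan is to establish a \emph{sufficient‑decrease} inequality of the form $f(u^k)-f(u^{k+1}) \ge c\,\norm{u^k-u^{k+1}}^2$ for a constant $c>0$ depending only on the lower bound $l$ of Assumption~2, and then telescope it against the fact that the objective in Equation~\eqref{eq:ntf}, being a squared Frobenius norm, is bounded below by $0$. To get the per‑step decrease I would fix an outer iteration $k$ and examine one accepted coordinate update $u_{qr}\mapsto u_{qr}+\hat u_{qr}$ at a time. With every other entry frozen, $f$ is an exact one‑variable quadratic in $u_{qr}$ whose curvature is $h_{rr}=(\boldsymbol{\mathrm{V}}^T\boldsymbol{\mathrm{V}}\ast\boldsymbol{\mathrm{W}}^T\boldsymbol{\mathrm{W}})_{rr}$, and the SaCD rule \eqref{eq_11}--\eqref{eq_13} is precisely the exact minimizer of that quadratic over the feasible half‑line $\{u_{qr}+\hat u_{qr}\ge 0\}$. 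A short case split (interior minimizer versus active nonnegativity constraint) shows the achieved decrease is at least $\tfrac{h_{rr}}{2}\,\hat u_{qr}^2$ in either case; invoking Assumption~2 to bound the effective curvature (equivalently $L^{(k-1)}$, with $l\le L^{(k-1)}\le L$) below by $l$, this is at least $\tfrac{l}{2}(u_{qr}^{k+1}-u_{qr}^{k})^2$, and it coincides, up to the curvature substitution, with the Lipschitz element importance $z_{qr}\ge 0$ of Equation~\eqref{eq_20}. Coordinates skipped because $sp_{qr}\le 0$ are left untouched, so they contribute zero to both sides.

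Summing the per‑coordinate bound over all updated $(q,r)$ in the sweep for $\boldsymbol{\mathrm{U}}$, and over the two analogous sweeps for $\boldsymbol{\mathrm{V}}$ and $\boldsymbol{\mathrm{W}}$, and using that $f$ is nonincreasing throughout iteration $k$, yields $f(u^k)-f(u^{k+1})\ge \tfrac{l}{2}\norm{u^k-u^{k+1}}^2$. Summing this over $k=1,\dots,K$ telescopes to $\tfrac{l}{2}\sum_{k=1}^{K}\norm{u^k-u^{k+1}}^2 \le f(u^1)-f(u^{K+1}) \le f(u^1)$, since $f\ge 0$. The partial sums are nondecreasing in $K$ and uniformly bounded by $\tfrac{2}{l}f(u^1)$, hence they converge, which is exactly $\sum_{k=1}^{\infty}\norm{u^k-u^{k+1}}^2<\infty$.

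The hard part will be the sufficient‑decrease step, and within it two points need care. First, Algorithm~\ref{alg:one} forms the gradient once per outer iteration and reuses it across the entire sweep, so the ``stale'' linear term has to be shown harmless: this is where Assumption~1 and Definition~3 enter, so that the whole sweep is treated as a legitimate block‑coordinate step on the global $L$‑smooth majorizer~\eqref{eq_lpzsmooth} rather than coordinate by coordinate against the true $f$. Second, one must confirm the saturation test never forces an $f$‑increasing move; this follows because every move actually taken is an exact projected one‑variable minimization of $f$ and can therefore only decrease it, while skipped coordinates leave $f$ unchanged. Once these are in place, the telescoping and the lower bound $f\ge 0$ finish the argument immediately.
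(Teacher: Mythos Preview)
Your proposal is correct and follows essentially the same route as the paper: derive a per-iteration sufficient-decrease bound $f(u^{k})-f(u^{k+1})\ge l\,\norm{u^{k}-u^{k+1}}^{2}$ (this is the content of the paper's inequality~\eqref{eq_29} after invoking Assumption~2), telescope over $k$, and use lower-boundedness of $f$. Your treatment of the coordinate-level mechanics (the stale gradient, the saturation test, the projected one-variable minimization) is considerably more careful than the paper's, which simply asserts the decrease inequality~\eqref{eq_28} without derivation, but the skeleton of the argument is the same.
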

\begin{proof}
For the element selection as per Equation~\eqref{eq_24}, we have inequality and therefore,

\begin{equation}
\label{eq_28}
f(u^k)-f(u^{k-1}) > L^{k}\norm{u^{k-1}-u^{k}}^2 - L^{k-1}\norm{u^{k-2}-u^{k-1}}^2. 
\end{equation}

If we sum the above inequality over $k$ from $1$ to $K$, we have
\begin{equation}
\label{eq_29}
f(u^1)-f(u^K) \geq \sum_{k=1}^K L^{k} \norm{u^{k-1}-u^{k}}^2-L^{k-1} \norm{u^{k-2}-u^{k-1}}
\end{equation}
\begin{equation*}
\geq \sum_{k=1}^K L^{k} \norm{u^{k-1}-u^{k}}^2 \geq \sum_{k=1}^K l \norm{u^{k-1}-u^{k}}^2.
\end{equation*}
As the function $f$ is lower bounded, for $k = \infty$, the proof satisfies. 
\end{proof}

The update rule in Equation~\eqref{eq_17} utilizes the Newton method \cite{lee2001algorithms} to apply the updates to the set of important elements that are identified by SaCD.
\begin{theorem}
	The newton method update rule using SaCD converges faster to the optimal solution by reaching the saturation point.
\end{theorem}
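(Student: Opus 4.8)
The plan is to combine the square-summability of successive iterates from Lemma~\ref{lemma3}, the $L$-smoothness established in Lemma~\ref{lemma1}, and the saturation test so as to show (i) the iterates converge to a KKT point of~\eqref{eq:ntf}, and (ii) the asymptotic rate is the $1/K^2$ rate quoted after~\eqref{eq_20}, which is faster than the $1/K$ behaviour of the un-smoothed coordinate/gradient updates of APG, FMU, FHALS and GCD. First I would note that every accepted update satisfies $sp_{qr}>0$, so by~\eqref{eq_16} the objective $f(\boldsymbol{\mathrm{U}}^k,\boldsymbol{\mathrm{V}}^k,\boldsymbol{\mathrm{W}}^k)$ is non-increasing; being bounded below by $0$ it converges, and Lemma~\ref{lemma3} then forces $\norm{u^k-u^{k+1}}\to 0$, so the iterates are asymptotically regular and, on a bounded level set, possess limit points.

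Second, I would show that any limit point $\bar{u}$ is stationary for~\eqref{eq:ntf}. The key observation is that at $\bar{u}$ the saturation test must fail for every coordinate: otherwise, by continuity of the one-variable decrease~\eqref{eq_20} in the current iterate, a uniformly strictly decreasing update would remain available in a neighbourhood of $\bar{u}$, contradicting $f(\cdot)\to f(\bar{u})$ together with asymptotic regularity. Writing out $sp_{qr}\le 0$ with the projected Newton step~\eqref{eq_12} and the gradient~\eqref{eq_14} then yields exactly the complementarity/KKT conditions $g_{qr}\ge 0$, $\bar{u}_{qr}\ge 0$, $\bar{u}_{qr}g_{qr}=0$; ``reaching the saturation point'' is therefore the same as reaching a local optimum, which is the first half of the claim.

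Third, for the ``faster'' half, I would invoke the curvature-measure argument of Jaggi referenced in the text. By Lemma~\ref{lemma1} the gradient is Lipschitz with constant $L=\norm{\boldsymbol{\mathrm{V}}^T\boldsymbol{\mathrm{V}}\ast\boldsymbol{\mathrm{W}}^T\boldsymbol{\mathrm{W}}}$, so replacing $h_{rr}$ by $L$ in~\eqref{eq_19} turns the one-variable subproblem into an exact quadratic majorizer of $f$ along that coordinate; the Newton step~\eqref{eq_17} is then the exact minimizer of this upper model, i.e. a majorization-minimization step with bounded curvature $C_f<\infty$. The standard rate estimate for such steps gives $f(\boldsymbol{\mathrm{U}}^K)-f^\star = O(C_f/K^2)$, the improved rate claimed, while the saturation filter only discards coordinates whose surrogate decrease is already exhausted, so it cannot degrade this bound; by Lemma~\ref{lemma2} it simultaneously removes the per-element gradient re-evaluations that slow GCD and the partial-differential-equation-based baselines.

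The step I expect to be the main obstacle is the second one: the theorem is stated informally and the saturation rules~\eqref{eq_24} and~\eqref{eq_27} are defined on the aggregate importance $z_{qr}$ rather than directly on the gradient, so one must rule out cycling between the two rules near a local minimum. I would handle this by arguing that once $ti^k\le ti^{k-1}$ the algorithm stays in the regime where~\eqref{eq_24} governs and the accepted set is non-increasing, hence eventually empty, at which point the KKT conditions above hold and the local quadratic model yields Newton-type (superlinear) local convergence -- the precise sense in which SaCD converges faster by reaching the saturation point.
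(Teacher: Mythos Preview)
Your approach differs substantially from the paper's. The paper does not touch Lemma~\ref{lemma3}, KKT conditions, or the Jaggi curvature bound; instead it gives a short one-dimensional Newton argument. It writes the projected Newton iteration $u^{k+1}=\max(0,u^k-f'(u^k)/f''(u^k))$, observes that restricting to the selected set $\boldsymbol{\mathrm{E}}$ keeps $f''(u)=\sum_{q,r}h_{rr}>\sum_{q,r\in\boldsymbol{\mathrm{E}}}h_{rr}>0$, uses the affine majorization $f'(b)\le f'(u)+(b-u)f''(u)$ to show $f'(u-f'(u)/f''(u))\le 0$, posits a limit $u^{sat}=\lim_k u^k$ with $f'(u^{sat})\le 0$, and then argues by contradiction: for large $k$ continuity forces $f'(u^k)/f''(u^k)<u^{sat}/(2f''(u^{sat}))$ and $u^{sat}-u^k<u^{sat}/(2f''(u^{sat}))$, whence $u^{sat}-u^{k+1}<0$, contradicting $u^{k+1}\le u^{sat}$. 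That is the entire proof; ``faster'' is concluded qualitatively from this contradiction rather than from any explicit rate.

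What each approach buys: the paper's argument is self-contained and avoids the machinery you bring in, but it never quantifies ``faster'' and the final contradiction step is terse. Your program is more ambitious---you try to pin down a genuine rate and to connect saturation to stationarity---but your third step is the weak link: the $O(C_f/K^2)$ estimate you quote is for Frank--Wolfe/accelerated first-order schemes on convex problems, not for coordinate-wise Newton/MM on a non-convex objective, so it does not transfer without a separate argument. If you want to follow the paper, drop the rate analysis and run the one-variable monotone Newton argument above; if you prefer your route, you must replace the Jaggi citation with an actual sufficient-decrease-plus-curvature computation tailored to the update~\eqref{eq_12}--\eqref{eq_13}.
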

\begin{proof}
	The Newton method based update uses the update sequence,
	\begin{equation}
	\label{eq_31}
	u^{k+1} = \max(0, u^k - \frac{f'(u^k)}{f''(u^k)}), k=0,1, \dots
	\end{equation}
	where \(k\) indicates the \(k^{th}\) iteration.
	Using SaCD, we select \( e_{qr} \in \boldsymbol{\mathrm{E}} \) where $\boldsymbol{\mathrm{E}}$ is a set of elements to be updated in \(\boldsymbol{\mathrm{U}}\). The gradients and second order derivatives for the sequence can be defined as:
	
	\begin{equation}
	\label{eq_32}
	f'(u) = - \sum_{q,r \in \boldsymbol{\mathrm{E}}} g^k_{qr},
	\end{equation}
	
	\begin{equation}
	\label{eq_33}
	f''(u) = \sum_{q,r} h^k_{rr} >  \sum_{q,r \in \boldsymbol{\mathrm{E}}} h^k_{rr} > 0.
	\end{equation}
	As per linear differential equations properties, for any positive \(f''(u),f'(b) \leq f'(u)+(b-u)f''(u)  \forall u,b \geq 0 \).
	As we know that \(f''(u) > 0\), and setting \(b=u-(f'(u))/(f''(u) )\)  , we have
	\begin{equation}
	\label{eq_34}
	f'(u - \frac{f'(u)}{f''(u)}) \leq 0.
	\end{equation}
	
	With the initialized \(u\), suppose the update sequence that holds these properties will converge to a saturation point \(u^{sat} = \lim_{k\to\infty} u^k\) where \(f'(u^{sat}) \leq 0\).  For the larger value of  \(k\) by continuity of given gradients and second-order derivatives,
	\begin{equation}
	\label{eq_35}
	\frac{f'(u^k)}{f''(u^k)} < \frac{u^{sat}}{2f''(u^{sat})},
	\end{equation}
	\begin{equation}
	\label{eq_36}
	u^{sat} - u^k < \frac{u^{sat}}{2f''(u^{sat})}.
	\end{equation}
	
	From \eqref{eq_35} and \eqref{eq_36}, we have \(u^{sat}-u^{k+1}  <0 \) that literally contradicts \(u^{k+1} \leq u^{sat}\). Hence, it can be said that the proposed algorithm will converge faster to the optimal solution.
\end{proof}

\subsection{Time Complexity}
\begin{lemma}
\label{lemma4}
The time complexity of SaCD is $O(|\boldsymbol{\mathcal{X}}|+(M+K)(Q^3 R+ QR^2+QR+1))$
\end{lemma}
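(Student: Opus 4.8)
The plan is to trace Algorithm~\ref{alg:one} line by line, separating (i) the one-time cost of reading and storing the tensor, (ii) the per-iteration, per-factor-matrix cost of forming $\boldsymbol{\mathrm{G}}$, $\boldsymbol{\mathrm{H}}$ and $L$, and (iii) the cost of the inner double loop over $(r,q)$, and then to sum over the $M$ factor matrices and the $K$ iterations. First I would charge $O(|\boldsymbol{\mathcal{X}}|)$ for loading and storing the (sparse) tensor, which is incurred once.

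Next I would analyze one factor-matrix update inside iteration $k$, say the update of $\boldsymbol{\mathrm{U}}$. Forming $\boldsymbol{\mathrm{H}} = \boldsymbol{\mathrm{V}}^T\boldsymbol{\mathrm{V}} \ast \boldsymbol{\mathrm{W}}^T\boldsymbol{\mathrm{W}}$ costs $O(QR^2)$ for the two Gram matrices plus $O(R^2)$ for the Hadamard product, and $L = \norm{\boldsymbol{\mathrm{H}}}$ adds $O(R^2)$. Forming $\boldsymbol{\mathrm{G}}$ via Equation~\eqref{eq_9} is dominated by the matricized-tensor-times-Khatri-Rao product $\boldsymbol{\mathrm{X_1}}(\boldsymbol{\mathrm{W}}\odot\boldsymbol{\mathrm{V}})$: since $\boldsymbol{\mathrm{X_1}} \in \R^{Q\times Q^2}$ and $\boldsymbol{\mathrm{W}}\odot\boldsymbol{\mathrm{V}} \in \R^{Q^2\times R}$, this is $O(Q^3R)$ (the Khatri-Rao product itself being only $O(Q^2R)$ and the trailing term $\boldsymbol{\mathrm{U}}(\boldsymbol{\mathrm{V}}^T\boldsymbol{\mathrm{V}}\ast\boldsymbol{\mathrm{W}}^T\boldsymbol{\mathrm{W}})$ only $O(QR^2)$), so $\boldsymbol{\mathrm{G}}$ costs $O(Q^3R + QR^2)$. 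The inner loop runs over $R$ columns and $Q=rows$ rows; in each of the $QR$ passes it evaluates $z_{qr}$ from Equation~\eqref{eq_20}, the saturation point from Equation~\eqref{eq_24} or~\eqref{eq_27}, and conditionally the update~\eqref{eq_13}, and because $\boldsymbol{\mathrm{G}}$, $\boldsymbol{\mathrm{H}}$ and $L$ are already available each of these steps is $O(1)$; maintaining the totals $ti^k$, $ti^{k-1}$ (Equations~\eqref{eq_25}--\eqref{eq_26}) is likewise $O(QR)$. Hence one factor-matrix update costs $O(Q^3R + QR^2 + QR + 1)$, and one complete iteration (updating $\boldsymbol{\mathrm{U}},\boldsymbol{\mathrm{V}},\boldsymbol{\mathrm{W}}$ via the analogous Equations~\eqref{eq_pdei}--\eqref{eq_ppdet}) costs $M$ times this.

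The crucial point — and the step I expect to be the main obstacle to phrase cleanly — is why the inner loop does \emph{not} force a recomputation of $\boldsymbol{\mathrm{G}}$ (and in particular of the expensive $mttkrp$) after every element update; this is exactly what Lemma~\ref{lemma2} provides: updating $u_{qr}$ refreshes only $g_{qr}$, not the whole row or column of $\boldsymbol{\mathrm{G}}$ nor the importances, so the $O(Q^3R)$ term is paid once per factor matrix per iteration rather than once per element. Finally I would combine the terms: the one-time $O(|\boldsymbol{\mathcal{X}}|)$, the $M$ factor-matrix updates per iteration, and the $K$ iterations aggregate to $O\!\big(|\boldsymbol{\mathcal{X}}| + (M+K)(Q^3R + QR^2 + QR + 1)\big)$ — here I would note that $M$ is a fixed small constant, so the $M$ per-mode setups plus the $K$ iterations of the core work collapse into the $(M+K)$ factor — which is the claimed bound.
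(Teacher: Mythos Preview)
Your proposal is correct and follows essentially the same approach as the paper: both charge $O(|\boldsymbol{\mathcal{X}}|)$ for handling the tensor once, identify $O(Q^3R)$ for the $mttkrp$ term and $O(QR^2)$ for the Gram--Hadamard term in $\boldsymbol{\mathrm{G}}$, bound the inner double loop by $O(QR)$ with $O(1)$ per-element work, and then aggregate over the $M$ factor matrices and $K$ iterations. The only cosmetic difference is that the paper phrases the $O(1)$ inner-loop step in terms of the number of selected elements $E\le QR$ and keeping $\boldsymbol{\mathrm{Z}}$ in memory, whereas you explicitly invoke Lemma~\ref{lemma2} to justify that $\boldsymbol{\mathrm{G}}$ is not recomputed per element; these are the same observation.
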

\begin{proof}
Algorithm~\ref{alg:one} includes five operations: initialization of factor matrices and respective importance matrix $\boldsymbol{\mathrm{V}}$, unfolding of the tensor, gradient calculation, updating of factor matrices, and updating of importance matrix. 

The random initialization of  $M$ number of factor matrices and respective importance matrix takes $O(2MQR)$. Unfolding the tensor generally takes $O(|\boldsymbol{\mathcal{X}}|)$. SaCD requires gradients $\boldsymbol{\mathrm{G}}$ to be calculated before updating the elements that involves the calculation of two terms $-\boldsymbol{\mathrm{X}}_1 (\boldsymbol{\mathrm{W}} \odot \boldsymbol{\mathrm{V}} )$ and $\boldsymbol{\mathrm{U}}(\boldsymbol{\mathrm{V}}\boldsymbol{\mathrm{V}}^T \ast \boldsymbol{\mathrm{W}}\boldsymbol{\mathrm{W}}^T)$ as shown in Equation~\eqref{eq_9} and Equation~\eqref{eq_14} and requires $O(Q^3 R),O(QR^2)$ respectively. Let $E$ be the total number of elements selected for update. Updating each factor matrix takes $O(E)$ and calculating $\boldsymbol{\mathrm{Z}}$ takes additional $O(E)$, where $E \leq QR$. For each iteration, the value of $E$ reduces and at some point, it will reach $0$. But if $\boldsymbol{\mathrm{Z}}$ can be kept in memory, it can be updated while updating each entry for $O(1)$. Thus, the time complexity of SaCD can be formulated as $O(|\boldsymbol{\mathcal{X}}|+(M+K)(Q^3 R+ QR^2+QR+1))$.
\end{proof}
The time complexity of a third-order tensor factorization algorithms, including SaCD, remains to be cubical ($Q^3 R$). However, the element selection in SaCD avoids frequent element updates and controls the complexity as the number of element updates $E \leq QR$.
\subsection{Memory Requirement}
\begin{lemma}
\label{lemma5}
The memory requirement of SaCD is $O(|\boldsymbol{\mathcal{X}}|+MQ(3R+ Q))$.
\end{lemma}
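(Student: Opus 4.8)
The plan is to walk through Algorithm~\ref{alg:one} and tally every array that SaCD must hold simultaneously, then bound each contribution and collect the dominant terms. I would organize the bookkeeping into three groups: (i) the input data, (ii) the quantities that persist across iterations, and (iii) transient working storage that can be reused across the $M$ modes and the $K$ iterations, so that the final statement is read as a peak-memory bound rather than a sum over iterations.

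First I would account for the input: the sparse tensor $\boldsymbol{\mathcal{X}}$ together with the mode-$n$ matricizations $\boldsymbol{\mathrm{X_1}},\boldsymbol{\mathrm{X_2}},\boldsymbol{\mathrm{X_3}}$ used in Equations~\eqref{eq_9}, \eqref{eq_pdei}, \eqref{eq_pdet}; stored in sparse (coordinate) form these all cost $O(|\boldsymbol{\mathcal{X}}|)$, consistent with the $O(|\boldsymbol{\mathcal{X}}|)$ unfolding cost already used in Lemma~\ref{lemma4}. Next, the persistent quantities: for each of the $M$ factor matrices we keep the matrix itself ($QR$ entries for the cubic $\boldsymbol{\mathcal{X}}\in\R^{(Q\times Q\times Q)}$), its importance matrix $\boldsymbol{\mathrm{Z}}$ ($QR$ entries, kept resident so that, as in Lemma~\ref{lemma4}, each $z_{qr}$ is refreshed in $O(1)$ during the element update), and the previous-iteration importance $\boldsymbol{\mathrm{Z}}^{k-1}$ ($QR$ entries, required to evaluate the saturation point $sp_{qr}$ via Equation~\eqref{eq_24} or~\eqref{eq_27}); this is three $Q\times R$ arrays per factor matrix, i.e. $O(3MQR)$, which supplies the $3R$ inside $MQ(3R+Q)$.

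Then I would bound the transient storage used to form the gradient and the step sizes. The second-order matrix $\boldsymbol{\mathrm{H}}=\boldsymbol{\mathrm{V}}^T\boldsymbol{\mathrm{V}}\ast\boldsymbol{\mathrm{W}}^T\boldsymbol{\mathrm{W}}$ and the Gram products $\boldsymbol{\mathrm{V}}^T\boldsymbol{\mathrm{V}}$, $\boldsymbol{\mathrm{W}}^T\boldsymbol{\mathrm{W}}$ are $R\times R$, while the correction term $\boldsymbol{\mathrm{U}}(\boldsymbol{\mathrm{V}}^T\boldsymbol{\mathrm{V}}\ast\boldsymbol{\mathrm{W}}^T\boldsymbol{\mathrm{W}})$ and the gradient $\boldsymbol{\mathrm{G}}$ are $Q\times R$ — all $O(QR+R^2)$ and absorbed below. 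The one genuinely larger term is the $mttkrp$ intermediate $\boldsymbol{\mathrm{X_1}}(\boldsymbol{\mathrm{W}}\odot\boldsymbol{\mathrm{V}})$: the Khatri--Rao factor $\boldsymbol{\mathrm{W}}\odot\boldsymbol{\mathrm{V}}$ has a dimension of length $PS=Q^2$ in the cubic case, so forming it one column at a time (each column $\boldsymbol{\mathrm{w}}_r\otimes\boldsymbol{\mathrm{v}}_r$ being $O(Q^2)$ and overwritten) costs $O(Q^2)$ per mode, hence $O(MQ^2)$ across the three modes; this supplies the $Q$ inside $MQ(3R+Q)$. Summing, $O(|\boldsymbol{\mathcal{X}}|)+O(3MQR)+O(MQ^2)+O(QR+R^2)=O(|\boldsymbol{\mathcal{X}}|+MQ(3R+Q))$, using $R=O(Q)$ to absorb the $R^2$ and $QR$ remainders, and random initialization adds nothing beyond the arrays already counted.

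The step I expect to be the main obstacle is the transient-memory accounting for the gradient, specifically justifying the $O(Q^2)$ rather than the naive $O(Q^2R)$ cost of the $mttkrp$: one must argue that the Khatri--Rao product need not be materialized in full, and that this scratch space, together with $\boldsymbol{\mathrm{G}}$ and $\boldsymbol{\mathrm{H}}$, is reused across the $M$ modes and $K$ iterations, so the bound is a true peak-memory bound. A secondary point worth stating explicitly is why both $\boldsymbol{\mathrm{Z}}$ and $\boldsymbol{\mathrm{Z}}^{k-1}$ must be retained — the saturation test in Equations~\eqref{eq_24}/\eqref{eq_27} compares consecutive importance values — since that is where one of the three $QR$-sized copies per factor matrix comes from.
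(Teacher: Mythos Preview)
Your proposal is correct and follows the same overall strategy as the paper --- enumerate the data structures SaCD keeps resident and sum their sizes --- but your decomposition of the constants differs from the paper's in two places, and yours is arguably the cleaner accounting. For the three $O(MQR)$ contributions, the paper tallies the factor matrices, the importance matrices $\boldsymbol{\mathrm{Z}}$, and the precomputed gradients $\boldsymbol{\mathrm{G}}$; you instead tally the factor matrices, $\boldsymbol{\mathrm{Z}}$, and the previous-iteration importances $\boldsymbol{\mathrm{Z}}^{k-1}$, absorbing $\boldsymbol{\mathrm{G}}$ into lower-order terms. Either choice yields $3MQR$, though your explicit justification that $\boldsymbol{\mathrm{Z}}^{k-1}$ must be retained for the saturation test is a detail the paper's proof omits. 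For the $O(MQ^2)$ term, the paper attributes it to the ``Hadamard matrix'' $\boldsymbol{\mathrm{H}}$ held per factor, whereas you attribute it to a single materialized column $\boldsymbol{\mathrm{w}}_r\otimes\boldsymbol{\mathrm{v}}_r$ of the Khatri--Rao product inside the $mttkrp$; since $\boldsymbol{\mathrm{H}}$ is in fact $R\times R$, your attribution is the more defensible source of the $Q^2$ factor in the cubic setting.
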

\begin{proof}
For the factorization of an input tensor $\boldsymbol{\mathcal{X}} \in \R^{Q \times Q \times Q}$, SaCD stores the following types of data in the memory at each iteration: Unfolded tensor; Factor matrices and respective importance matrices; and precomputed gradient and second order derivative. The unfolded tensor with respective to any mode requires $O(|\boldsymbol{\mathcal{X}}|)$. The $M$ number of factor matrices require $O(MQR)$ while the respective importance matrices require the same amount of memory $O(MQR)$. The precomputed gradients which is of the same size as the respective factor matrices will require another $O(MQR)$. The Hadamard matrix which is a square matrix for each factor matrix will require an additional $O(MQ^2)$ memory. Thus, the memory requirement of SaCD is $O(|\boldsymbol{\mathcal{X}}|+MQ(3R+ Q))$.
\end{proof}

\section{Empirical Analysis}
We would like to validate that SaCD can perform the factorization process efficietly as well as accurately. Experiments were conducted to answer the following questions: 

\textbf{Q1}. How scalable is SaCD? What is its runtime performance?

\textbf{Q2}. How accurately can SaCD predict missing values and can be used in recommendation?

\textbf{Q3}. How accurately SaCD identify the unique patterns and can be used in pattern mining?

\textbf{Q4}. What is the impact of parallelization on SaCD?

\subsection{Datasets}
Several real-world and synthetic datasets were used to evaluate the performance of SaCD in comparison to the state-of-the-art algorithms. Table~\ref{table_datasets} details the four real-world datasets used. Delicious\footnote{https://del.icio.us/} consists of $1797$ user’s tagging behavior on 24073 URLs with $15752$ tags. LastFM\footnote{https://www.last.fm/} consists of $1583$ users, $8383$ artists and $3886$ tags associated with the artists. Movielens\footnote{https://movielens.org/} consists of $1129$, $3884$ and $3693$ user, movies and tags respectively.  Gowalla\footnote{http://www.yongliu.org/datasets/}, the LBSN Foursquare dataset, records the $1$ $Million$ users' check-in activity at $2$ $Million$ locations. 
\begin{table}[!t]
\renewcommand{\arraystretch}{1.3}
\caption{Real-world Dataset Details. (M: Million)}
\label{table_datasets}
\centering
\begin{tabular}{lll}
\toprule
\bfseries Dataset & \bfseries Tensor Size & \bfseries Density\\
\toprule
Delicious  & $1797 \times 24073 \times 15752$   & $0.0000003$\\
LastFM & $1583 \times 8383 \times 3886$  & $0.000002$\\
Movielens & $1129 \times 3884 \times 3693$  & $0.000001$\\
Gowalla & $1M \times 2M \times 24$  &  $0.000001$\\
\bottomrule
\end{tabular}
\end{table}
\subsection{Experimental Setup and Benchmarks}
The source codes of SaCD and its parallelized version have been made available\footnote{https://github.com/thirubs/SaCD}. All single-core experiments were executed on $Intel$ $(R)$ $Core^{TM}$ $i7-6600U$ $CPU$ $@$ $2.60GHz$ model with $16GB$ $RAM$. The multi-cores experiments in section 6.4 were executed on $Intel$ $(R)$ $Xeon(R)$ $CPU$ $E5-2680$ $v3$ $@$ $2.50GHz$ model with $12GB$ $RAM$ and $12$ $cores$. For real-world datasets, we use $5$ fold cross validation with $80$\% of data used for training and $20$\% for testing. 

We compare SaCD with the following benchmark algorithms.
\begin{enumerate}
    \item \textbf{APG} \cite{zhang2016fast} uses gradients to accelerate the convergence. The objective function is smoothed using the proximal gradients and the gradient calculation is simplified using the low-rank approximations. Instead of calculating the gradient using the original tensor $\boldsymbol{\mathcal{X}}$, the low-rank approximations of the original tensor $\boldsymbol{\mathcal{X}}$ is used to calculate the gradients.  
    \item \textbf{FMU}~\cite{phan2012fast} and \textbf{FHALS}~\cite{phan2013fast} are optimized variations of MU and ALS respectively. The tensor unfolding and the Kronecker product during the gradient calculations are simplified to minimize the computational cost.
    \item \textbf{BCDP}~\cite{xu2013block} decomposes the non-convex optimization function into multiple blocks of convex problem. And the convex blocks are solved cyclically to update the factor matrices. The convex blocks are smoothed for fast convergence.
    \item \textbf{CDTF}~\cite{shin2017fully} is a latest CD algorithm for TF. For better scalability, the factor matrices are updated column-wise alternatively. For a fair comparision, we use the serial version of CDTF for NTF.
    \item \textbf{GCD}~\cite{balasubramaniam2018nonnegative} is a element selection based CD algorithm which selects important elements and update repeatedly for fast convergence. The elements are selected row-wise using the frequent gradient updates.
\end{enumerate}

\subsection{Evalution Criteria}
Tensor completion has its special properties, that discriminate it with factorization, such as the effect of the missing values on the rank/regularization selection and the optimization method. The Root Means Square Error ($RMSE$) is a commonly used metric to evaluate the tensor approximation performance. The recommendation quality is evaluated using precision, recall, and F1 score,

\begin{equation}
\label{eq_30}
RMSE = \sqrt{\frac{\sum(\boldsymbol{\mathcal{X}}_{test} - \boldsymbol{\mathcal{\hat{X}}}_{test})^2}{n}},
\end{equation}
where $\boldsymbol{\mathcal{X}}_{test}$ is the test data,  $\boldsymbol{\mathcal{\hat{X}}}_{test}$ is approximated data and $n$ is the number of elements in test data. 

\begin{equation}
    \label{eq_precision}
    \text{Precision} = \frac{True~Positive}{True~Positive + False~Positive}.
\end{equation}

\begin{equation}
    \label{eq_recall}
    \text{Recall} = \frac{True~Positive}{True~Positive + False~Negative}.
\end{equation}
\begin{equation}
\label{eq_f1}
    \text{F1 score} = 2\left(\frac{Precision \times Recall}{Precision + Recall}\right).
\end{equation}

We propose to use Pattern Distinctiveness ($PD$) to evaluate the quality of patterns learned using NTF as follows. 
    \begin{equation}
    \label{eq_pd}
    PD = Cosine(\boldsymbol{\mathrm{w}}_i,\boldsymbol{\mathrm{w}}_r), \forall i,r \in [1,R], i<r,
    \end{equation}
where $ Cosine(\boldsymbol{\mathrm{w}}_i$, $\boldsymbol{\mathrm{w}}_r)$  indicates the cosine similarity of $i^{th}$ and $r^{th}$ column of a factor matrix $\boldsymbol{\mathrm{W}}$.

$PD$ measures the similarity of each pattern with other patterns. So higher the $PD$ value, higher is the similarity between patterns. Since the objective is to identify unique patterns, lower the $PD$ value, better the quality of learned patterns is demonstrated.

\subsection{Scalability Analysis}
\label{scalabilityanalysis}
We evaluate the scalability of SaCD and other algorithms, with regards to size (mode length), density, and rank of the tensor, using synthetic data of diverse characteristics. We randomly generated tensors of size ranging from $64$ $\times$ $64$ $\times$ $64$ to $16384$ $\times$ $16384$ $\times$ $16384$, density ranging from $0.001$ (dense) to $0.0000001$ (sparse) and rank ranging from $10$ to $125$. Experiments using these synthetic data show the stability of SaCD and all the benchmarking algorithms in different data characteristics. 

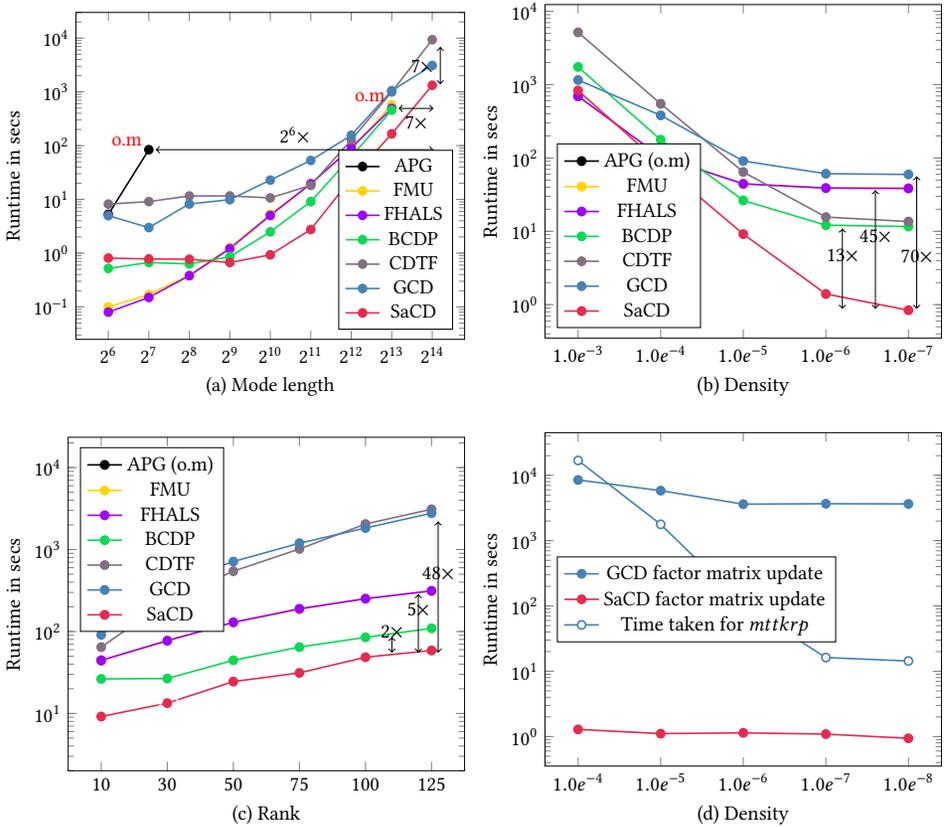
\begin{figure}
	\centering
	\subfloat{
		\pgfplotstableread[col sep=space]{dim.dat}\datatable%
		\resizebox {.45\columnwidth} {!} {
			\begin{tikzpicture}
			\begin{axis}[
			xticklabels from table={dim.dat}{xaxis},
			xlabel = {(a) Mode length},
			xtick=data,
			ymode=log,
			legend pos=south east,
			ylabel={Runtime in secs}
			]%
			
			\addplot [
			white,
			nodes near coords, 
			forget plot,
			]table[
			x expr=\coordindex,
			y=dum,     
			]{\datatable};%
			
			\addplot [
			capg,
			thick,
			mark=*,
			mark options={fill=capg},
			nodes near coords, 
			point meta=explicit symbolic, 
			]table[
			x expr=\coordindex,
			y=apg,       
			] {\datatable};%
			
			\addplot [
			cfmu,
			thick,
			mark=*,
			mark options={fill=cfmu},
			nodes near coords, 
			point meta=explicit symbolic, 
			]table[
			x expr=\coordindex,
			y=fmu,       
			] {\datatable};%
			
			\addplot [
			cfhals,
			thick,
			mark=*,
			mark options={fill=cfhals},
			nodes near coords, 
			point meta=explicit symbolic, 
			]table[
			x expr=\coordindex,
			y=fhals,       
			] {\datatable};%
			
			\addplot [
			cbcdp,
			thick,
			mark=*,
			mark options={fill=cbcdp},
			nodes near coords, 
			point meta=explicit symbolic, 
			]table[
			x expr=\coordindex,
			y=bcdp,       
			] {\datatable};%
			
			\addplot [
			ccdtf,
			thick,
			mark=*,
			mark options={fill=ccdtf},
			nodes near coords, 
			point meta=explicit symbolic, 
			]table[
			x expr=\coordindex,
			y=cdtf,       
			] {\datatable};%
			
			\addplot [
			cgcd,
			thick,
			mark=*,
			mark options={fill=cgcd},
			nodes near coords, 
			point meta=explicit symbolic, 
			]table[
			x expr=\coordindex,
			y=gcd,       
			] {\datatable};%
			
			\addplot [
			csacd,
			thick,
			mark=*,
			mark options={fill=csacd},
			nodes near coords, 
			point meta=explicit symbolic, 
			]table[
			x expr=\coordindex,
			y=sacd,       
			] {\datatable};%
			
			\addplot [
			red,
			nodes near coords, 
			forget plot,
			only marks,
			point meta=explicit symbolic, 
			every node near coord/.style={anchor=-20} 
			]table[
			x expr=\coordindex,
			y=out,     
			meta index = 17,
			] {\datatable};%
			
			\legend{APG,FMU,FHALS,BCDP,CDTF,GCD,SaCD}%
			\node (source) at (axis cs:1,84.02){};%
			\node (destination) at (axis cs:8.2,84.02){};%
			\draw[black, <->](source) -- node [above, midway] {$2^6\times$} (destination);%
			\node (source) at (axis cs:7,492.6){};%
			\node (destination) at (axis cs:8.2,492.6){};%
			\draw[black, <->](source) -- node [below, midway] {$7\times$} (destination);%
			\node (source) at (axis cs:8.2,9300){};%
			\node (destination) at (axis cs:8.2,1025){};%
			\draw[black, <->](source) -- node [left, midway] {$7\times$} (destination);%
			\end{axis}%
			\end{tikzpicture}%
	}}%
	\subfloat{
		\pgfplotstableread[col sep=space]{den.dat}\datatable
		\resizebox {.45\columnwidth} {!} {
			\begin{tikzpicture}
			\begin{axis}[
			xticklabels from table={den.dat}{xaxis},
			xtick=data,
			xlabel = {(b) Density},
			ymode=log,
			legend pos=south west,
			ylabel={Runtime in secs}
			]
			\addplot [
			white,
			nodes near coords, 
			forget plot,
			]
			table[
			x expr=\coordindex,
			y=dum,     
			] {\datatable};
			\addplot [
			capg,
			thick,
			mark=*,
			mark options={fill=capg},
			nodes near coords, 
			point meta=explicit symbolic, 
			]
			table[
			x expr=\coordindex,
			y=apg,       
			] {\datatable};
			\addplot [
			cfmu,
			thick,
			mark=*,
			mark options={fill=cfmu},
			nodes near coords, 
			point meta=explicit symbolic, 
			]
			table[
			x expr=\coordindex,
			y=fmu,       
			] {\datatable};
			\addplot [
			cfhals,
			thick,
			mark=*,
			mark options={fill=cfhals},
			nodes near coords, 
			point meta=explicit symbolic, 
			]
			table[
			x expr=\coordindex,
			y=fhals,       
			] {\datatable};
			\addplot [
			cbcdp,
			thick,
			mark=*,
			mark options={fill=cbcdp},
			nodes near coords, 
			point meta=explicit symbolic, 
			]
			table[
			x expr=\coordindex,
			y=bcdp,       
			] {\datatable};
			\addplot [
			ccdtf,
			thick,
			mark=*,
			mark options={fill=ccdtf},
			nodes near coords, 
			point meta=explicit symbolic, 
			]
			table[
			x expr=\coordindex,
			y=cdtf,       
			] {\datatable};
			\addplot [
			cgcd,
			thick,
			mark=*,
			mark options={fill=cgcd},
			nodes near coords, 
			point meta=explicit symbolic, 
			]
			table[
			x expr=\coordindex,
			y=gcd,       
			] {\datatable};
			\addplot [
			csacd,
			thick,
			mark=*,
			mark options={fill=csacd},
			nodes near coords, 
			point meta=explicit symbolic, 
			]
			table[
			x expr=\coordindex,
			y=sacd,       
			] {\datatable};
			\node (source) at (axis cs:4.1,70.69){};
			\node (destination) at (axis cs:4.1,0.7){};
			\draw[black, <->](source) -- node [below, midway] {$70\times$} (destination);
			\node (source) at (axis cs:3.6,45.69){};
			\node (destination) at (axis cs:3.6,0.7){};
			\draw[black, <->](source) -- node [above, midway] {$45\times$} (destination);
			\node (source) at (axis cs:3.2,13.69){};
			\node (destination) at (axis cs:3.2,0.7){};
			\draw[black, <->](source) -- node [above, midway] {$13\times$} (destination);
			\legend{APG (o.m),FMU,FHALS,BCDP,CDTF,GCD,SaCD}
			\end{axis}
			\end{tikzpicture}}}
	\qquad
	\subfloat{
		\pgfplotstableread[col sep=space]{ranki.dat}\datatable
		\resizebox {.45\columnwidth} {!} {
			\begin{tikzpicture}
			\begin{axis}[
			xticklabels from table={ranki.dat}{xaxis},
			xtick=data,
			xlabel = {(c) Rank},
			ymode=log,
			ymin = 2,
			legend pos=north west,
			ylabel={Runtime in secs}
			]
			\addplot [
			white,
			nodes near coords, 
			forget plot,
			]
			table[
			x expr=\coordindex,
			y=dum,     
			] {\datatable};
			\addplot [
			capg,
			thick,
			mark=*,
			mark options={fill=capg},
			nodes near coords, 
			point meta=explicit symbolic, 
			]
			table[
			x expr=\coordindex,
			y=apg,       
			] {\datatable};
			\addplot [
			cfmu,
			thick,
			mark=*,
			mark options={fill=cfmu},
			nodes near coords, 
			point meta=explicit symbolic, 
			]
			table[
			x expr=\coordindex,
			y=fmu,       
			] {\datatable};
			\addplot [
			cfhals,
			thick,
			mark=*,
			mark options={fill=cfhals},
			nodes near coords, 
			point meta=explicit symbolic, 
			]
			table[
			x expr=\coordindex,
			y=fhals,       
			] {\datatable};
			\addplot [
			cbcdp,
			thick,
			mark=*,
			mark options={fill=cbcdp},
			nodes near coords, 
			point meta=explicit symbolic, 
			]
			table[
			x expr=\coordindex,
			y=bcdp,       
			] {\datatable};
			\addplot [
			ccdtf,
			thick,
			mark=*,
			mark options={fill=ccdtf},
			nodes near coords, 
			point meta=explicit symbolic, 
			]
			table[
			x expr=\coordindex,
			y=cdtf,       
			] {\datatable};
			\addplot [
			cgcd,
			thick,
			mark=*,
			mark options={fill=cgcd},
			nodes near coords, 
			point meta=explicit symbolic, 
			]
			table[
			x expr=\coordindex,
			y=gcd,       
			] {\datatable};
			\addplot [
			csacd,
			thick,
			mark=*,
			mark options={fill=csacd},
			nodes near coords, 
			point meta=explicit symbolic, 
			]
			table[
			x expr=\coordindex,
			y=sacd,       
			] {\datatable};
			\node (source) at (axis cs:5.1,2700){};
			\node (destination) at (axis cs:5.1,45){};
			\draw[black, <->](source) -- node [above, midway] {$48\times$} (destination);
			\node (source) at (axis cs:4.8,350){};
			\node (destination) at (axis cs:4.8,45){};
			\draw[black, <->](source) -- node [above, midway] {$5\times$} (destination);
			\node (source) at (axis cs:4.4,105){};
			\node (destination) at (axis cs:4.4,45){};
			\draw[black, <->](source) -- node [above, midway] {$2\times$} (destination);
			\legend{APG (o.m),FMU,FHALS,BCDP,CDTF,GCD,SaCD};
			\end{axis}
			\end{tikzpicture}}}%
	\subfloat{
		\pgfplotstableread[col sep=space]{denmttkrp.dat}\datatable
		\resizebox {.45\columnwidth} {!} {
			\begin{tikzpicture}
			\begin{axis}[
			xticklabels from table={denmttkrp.dat}{xaxis},
			xtick=data,
			xlabel = {(d) Density},
			ymode=log,
			legend style={at={(0.03,0.5)},anchor=west},
			ylabel={Runtime in secs}
			]
			\addplot [
			cgcd,
			thick,
			mark=*,
			mark options={fill=cgcd},
			nodes near coords, 
			point meta=explicit symbolic, 
			]
			table[
			x expr=\coordindex,
			y=gcd,       
			] {\datatable};
			\addplot [
			csacd,
			thick,
			mark=*,
			mark options={fill=csacd},
			nodes near coords, 
			point meta=explicit symbolic, 
			]
			table[
			x expr=\coordindex,
			y=sacd,       
			] {\datatable};
			\addplot [
			cgcd,
			thick,
			mark=*,
			mark options={fill=white},
			nodes near coords, 
			point meta=explicit symbolic, 
			]
			table[
			x expr=\coordindex,
			y=mttkrp,       
			] {\datatable};
			\legend{GCD factor matrix update,SaCD factor matrix update,Time taken for $mttkrp$};
			\end{axis}
			\end{tikzpicture}}}%
	\caption{Scalability Analysis using synthetic datasets. FMU and FHALS shows similar runtime performance and the lines are overlapped. o.m. out of memory.}%
	\label{fig:2}%
\end{figure}

\textbf{Mode length}. We increase the mode length $Q=P=S$ of each mode from $2^6$  to $2^{14}$ with setting the tensor density and rank to $0.00001$ and $10$ respectively. We set all the algorithms to run for maximum iterations of $30$. As shown in Figure~\ref{fig:2}(a), SaCD successfully handles the tensor of size $Q=P=S=2^{14}$. Whereas APG \cite{zhang2016fast} ran out of memory for the tensor size $Q=P=S > 2^7$, and FMU \cite{phan2012fast}, FHALS \cite{phan2013fast}, and BCDP \cite{xu2013block} ran out of memory for the tensor size $Q=P=S > 2^{13}$. Overall SaCD can factorize $2^6$ to $7$ $times$ larger tensors when compared to existing algorithms. The runtime performance of SaCD is almost constant for the smaller size tensors, however, it increases linearly for large size tensors.  This is due to the matricized tensor times Khatri-Rao product ($mttkrp$), $\boldsymbol{\mathrm{X}}_1(\boldsymbol{\mathrm{W}} \odot \boldsymbol{\mathrm{V}})$ needed in Equation~\eqref{eq_9} for the gradient calculation. In general, SaCD yields a significant time saving in comparison to other algorithms, especially GCD and CDTF, due to the avoidance of frequent gradient updates.

\textbf{Density}. In this set of experiments, we fix the tensor mode length to $Q=P=S =3000$ and rank to $10$ while decreasing the density from $1.0e^{-3}$  to $1.0e^{-7}$. As shown in  Figure~\ref{fig:2}(b), SaCD is $13$ to $70$ $times$ faster than existing algorithms for the very sparse dataset. The runtime performance of SaCD improves when the sparsity increases, due to a gradual reduction in the $mttkrp$ operation, as shown in  Figure~\ref{fig:2}(d). On the other hand, the runtime performance of GCD and CDTF degardes with an increase in sparsity. In comparison to GCD (Figure~\ref{fig:2}(d)), there is a significant time saving due to avoidance of the frequent gradient update.

\textbf{Rank}. Here, we fix the tensor mode length $Q=P=S=3000$  and density to $0.00001$ while increasing the rank from $10$ to $125$.  Figure~\ref{fig:2}(c) reveals SaCD outperforms all other algorithms easily. As proven in lemma~\ref{lemma2}, SaCD avoids frequent gradients, hence the increase in rank doesn’t adversely affect its performance.

\subsection{Tensor Approximation Performance}

In addition to scalable factorization process, it is essential that the approximated (i.e., reconstructed) tensor has good accuracy. The factor matrices learned using factorization is used to reconstruct the approximated tensor that will identify missing values. Figure~\ref{fig:rmsesyn} reports the RMSE performance of all the algorithms for the synthetic datasets used in previous section to evaluate the scalability. It is evident that SaCD doesn’t compromise with accuracy for better runtime performance and produces the best result with less error in comparison to benchmarks. 

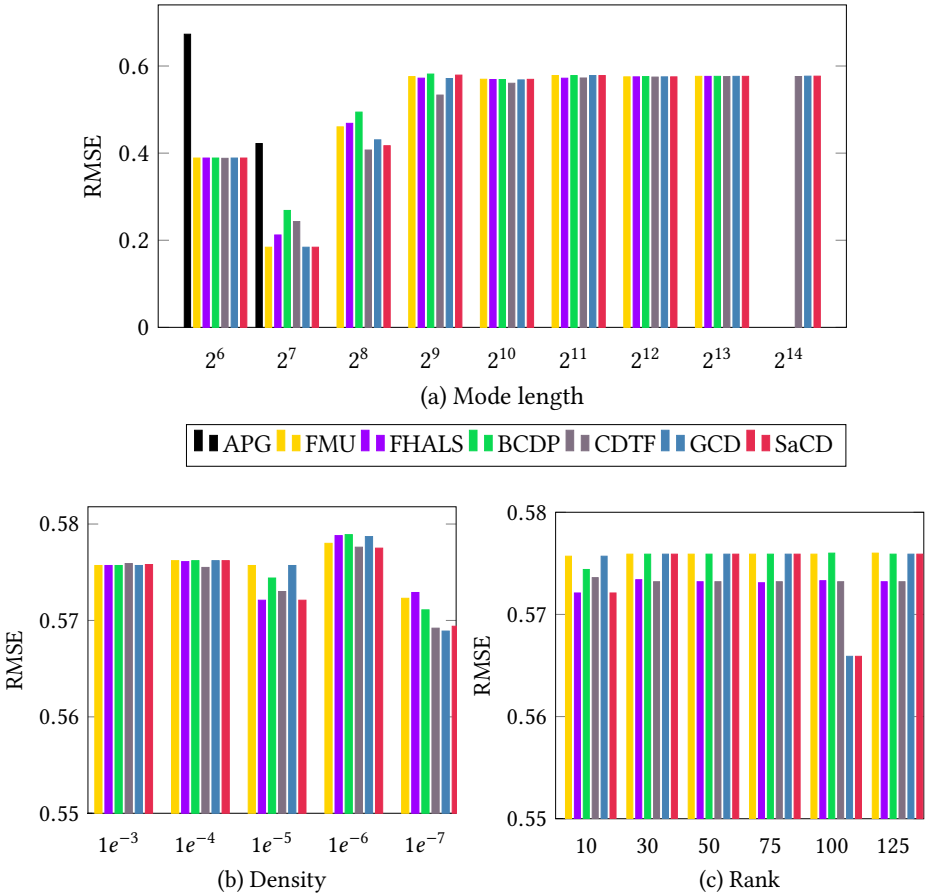
\begin{figure}[ht!]
	\centering
	\subfloat{
		\resizebox {.75\columnwidth} {!} {
			\begin{tikzpicture}
			\begin{axis}[
			width  = 0.8*\textwidth,
			height = 6cm,
			major x tick style = transparent,
			ybar=3*\pgflinewidth,
			bar width=2.5pt,
			ymajorgrids = false,
			ylabel = {RMSE},
			xlabel = {(a) Mode length},
			xticklabels={$2^6$,$2^7$,$2^8$,$2^9$,$2^{10}$,$2^{11}$,$2^{12}$,$2^{13}$,$2^{14}$},
			xtick={0,...,8},
			scaled y ticks = false,
			ymin=0,
			legend cell align=left,
			legend style={at={(1,-0.3)},anchor=north east,legend columns=-1},
			]
			\addplot[style={capg,fill=capg,mark=none}]
			coordinates {(0, 0.673) (1,0.422)};
			
			\addplot[style={cfmu,fill=cfmu,mark=none}]
			coordinates {(0, 0.3887) (1,0.184)(2,0.4603)(3,0.5757)(4,0.5696)(5,0.5781)(6,0.5751)(7,0.5763)};
			
			\addplot[style={cfhals,fill=cfhals,mark=none}]
			coordinates {(0, 0.3887) (1,0.2121)(2,0.4684)(3,0.5721)(4,0.569)(5,0.572)(6,0.5751)(7,0.5762)};
			
			\addplot[style={cbcdp,fill=cbcdp,mark=none}]
			coordinates {(0, 0.3887) (1,0.2684)(2,0.4942)(3,0.5815)(4,0.5689)(5,0.5781)(6,0.5757)(7,0.5763)};
			
			\addplot[style={ccdtf,fill=ccdtf,mark=none}]
			coordinates {(0, 0.3881) (1,0.2429)(2,0.4071)(3,0.5333)(4,0.5604)(5,0.5725)(6,0.5746)(7,0.5757)(8,0.5759)};
			\addplot[style={cgcd,fill=cgcd,mark=none}]
			coordinates {(0, 0.3887) (1,0.184)(2,0.4305)(3,0.5713)(4,0.5681)(5,0.5781)(6,0.5751)(7,0.5763)(8,0.5767)};
			
			\addplot[style={csacd,fill=csacd,mark=none}]
			coordinates {(0, 0.3887) (1,0.184)(2,0.417)(3,0.5791)(4,0.5694)(5,0.5781)(6,0.5751)(7,0.5763)(8,0.5767)};

			\legend{APG, FMU, FHALS,BCDP,CDTF, GCD, SaCD}
			\end{axis}
			\end{tikzpicture}
	}}
	\hspace{-0.6em}
	\subfloat{
		\resizebox {.45\columnwidth} {!} {
			\begin{tikzpicture}
			\begin{axis}[
			width  = 0.5*\textwidth,
			height = 6cm,
			major x tick style = transparent,
			ybar=3*\pgflinewidth,
			bar width=3pt,
			ymajorgrids = false,
			ylabel = {RMSE},
			xlabel = {(b) Density},
			xticklabels={$1e^{-3}$, $1e^{-4}$, $1e^{-5}$, $1e^{-6}$, $1e^{-7}$},
			xtick={0,...,4},
			scaled y ticks = false,
			ymin=0.55,
			legend cell align=left,
			legend style={at={(1,-0.5)},anchor=north east,legend columns=-1},
			]
			
			\addplot[style={capg,fill=capg,mark=none}]
			coordinates {(0, 0)};
			
			\addplot[style={cfmu,fill=cfmu,mark=none}]
			coordinates {(0, 0.5757) (1,0.5762)(2,0.5757)(3,0.5780)(4,0.5723)};
			
			\addplot[style={cfhals,fill=cfhals,mark=none}]
			coordinates {(0, 0.5757) (1,0.5761)(2,0.5721)(3,0.5788)(4,0.5729)};
			
			\addplot[style={cbcdp,fill=cbcdp,mark=none}]
			coordinates {(0, 0.5757) (1,0.5762)(2,0.5744)(3,0.5789)(4,0.5711)};
			
	    	\addplot[style={ccdtf,fill=ccdtf,mark=none}]
			coordinates {(0, 0.5759) (1,0.5755)(2,0.5730)(3,0.5776)(4,0.5692)};
			\addplot[style={cgcd,fill=cgcd,mark=none}]
			coordinates {(0, 0.5757) (1,0.5762)(2,0.5757)(3,0.5787)(4,0.5689)};
			
			\addplot[style={csacd,fill=csacd,mark=none}]
			coordinates {(0, 0.5758) (1,0.5762)(2,0.5721)(3,0.5775)(4,0.5694)};

			\end{axis}
			\end{tikzpicture}
	}} 
	\hspace{-0.6em}
	\subfloat{
		\resizebox {.45\columnwidth} {!} {
			\begin{tikzpicture}
			\begin{axis}[
			width  = 0.5*\textwidth,
			height = 6cm,
			major x tick style = transparent,
			ybar=3*\pgflinewidth,
			bar width=2.5pt,
			ymajorgrids = false,
			ylabel = {RMSE},
			xlabel = {(c) Rank},
			xticklabels={10, 30, 50, 75, 100, 125},
			xtick={0,...,5},
			scaled y ticks = false,
			ymin=0.55,
			ymax = 0.58,
			legend cell align=left,
			legend style={at={(1,-0.5)},anchor=north east,legend columns=-1},
			]
			
			\addplot[style={capg,fill=capg,mark=none}]
			coordinates {(0, 0)};
			
			\addplot[style={cfmu,fill=cfmu,mark=none}]
			coordinates {(0, 0.5757) (1,0.5759)(2,0.5759)(3,0.5759)(4,0.5759)(5,0.5760)};
			
			\addplot[style={cfhals,fill=cfhals,mark=none}]
			coordinates {(0, 0.5721) (1,0.5734)(2,0.5732)(3,0.5731)(4,0.5733)(5,0.5732)};
			
			\addplot[style={cbcdp,fill=cbcdp,mark=none}]
			coordinates {(0, 0.5744) (1,0.5759)(2,0.5759)(3,0.5759)(4,0.5760)(5,0.5759)};
			\addplot[style={ccdtf,fill=ccdtf,mark=none}]
			coordinates {(0, 0.5736) (1,0.5732)(2,0.5732)(3,0.5732)(4,0.5732)(5,0.5732)};
			\addplot[style={cgcd,fill=cgcd,mark=none}]
			coordinates {(0, 0.5757) (1,0.5759)(2,0.5759)(3,0.5759)(4,0.5659)(5,0.5759)};
			
			\addplot[style={csacd,fill=csacd,mark=none}]
			coordinates {(0, 0.5721) (1,0.5759)(2,0.5759)(3,0.5759)(4,0.5659)(5,0.5759)};
			
			\end{axis}
			\end{tikzpicture}
	}}
	\caption{RMSE performance of all the algorithms on synthetic datasets used to evaluate the scalability. APG does not scale for mode length $>~2^7$ whereas FMU, FHALS, and BCDP do not scale for mode length $>2^{13}$.}%
	\label{fig:rmsesyn}%
\end{figure}
\subsection{Recommendation or Prediction Performance}
The NTF problem can be considered as a solution to a recommendation or prediction problem where the estimated missing data are treated as prediction. The approximated tensor reconstructed using the factor matrices can be used to infer new values based on associations. In LastFM, Delicious, and Movielens datasets, the goal is to predict the missing entries of the tensor as accurately as possible. These entries are then inferred as “most likely items” that can be recommended to users. 

It is evident from  Figure~\ref{fig:3} that SaCD doesn’t compromise with accuracy for better runtime performance and produces the best result with less error in comparison to benchmarks. Especially in Movielens dataset, SaCD shows at least $2.5$\% accuracy improvement over other algorithms. Similar to synthetic datasets, APG and  BCDP ran out of memory ($o.m$) to process the Delicious dataset, and both FMU and FHALS ran out of time ($o.o.t$) with an increase in the rank. While all other algorithms fail to process the Gowalla dataset for higher rank, only SaCD can successfully complete the process.  The materialization  of matrices in these algorithms requires large memory making them inefficient to deal with higher ranks. It can be noticed that the higher the rank, higher is the accuracy of SaCD. However, due to the increased memory requirements, APG, BCDP, FMU, and FHALS are not suitable for higher ranks. GCD and CDTF are able to process at higher ranks, however, they are significantly slower and run out of time.

Similar performance is obtained for the measures of precision, recall, and F1 score. It can be seen in Figure~\ref{fig:rec} that SaCD significantly outperforms other algorithms in the recommendation performance on all four real-world datasets.

\begin{figure}
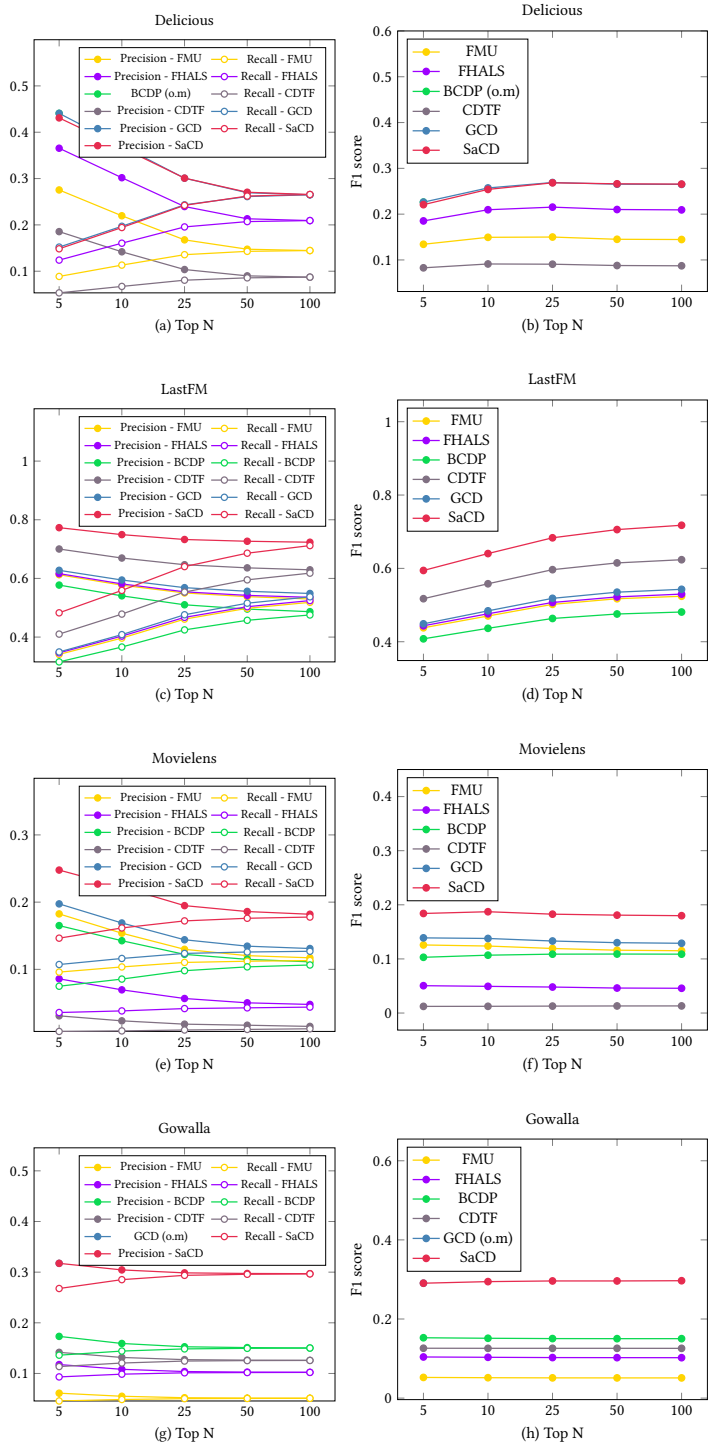

	\centering
	\subfloat{
		\pgfplotstableread[col sep=space]{delrmr.dat}\datatable%
		\resizebox {.35\columnwidth} {!} {
%
}}
	\caption{Precision, Recall, and F1 score on real-world datasets.}%
	\label{fig:rec}%
\end{figure*}

\subsection{Pattern Mining}
In addition to missing value prediction, NTF can also be used to identify the patterns automatically. 

For LBSN datasets such as Gowalla, we have time as the $3^{rd}$ mode. By setting the rank of a tensor as $4$ in the factorization process, we identified $4$ different patterns in the temporal factor matrix. Table~\ref{table_gowalla} shows the PD values of all algorithms on the Gowalla dataset. SaCD outperforms all the baselines. GCD ran out of time due to the large mode length of the tensor. While FMU and FHALS can execute due to the low rank setting, they are not able to distinguish the patterns distinctly. 

In Figure~\ref{fig:temporal}, we plot the values of the factor matrix in “temporal mode”, which have $24$ $hours$ as $x-axis$ and $y-axis$ represents the normalized value of the factor matrix in each column. Figure~\ref{fig:temporal}(e) shows the patterns obtained by SaCD. The red pattern shows a peak at $1$ $pm$ and $9$ $pm$ that probably indicates the lunch and dinner time. The pink pattern shows a very common $7$ $am$ to $10$ $pm$ activity. On the other hand, the blue pattern shows a unique pattern with a peak at $1$ $am$, indicating night time activity. The green pattern shows activities between $1$ $pm$ and $6$ $am$. With a proper domain knowledge, the kind of activities that happens in different time periods can be easily interpreted by using distinct patterns. It is interesting to note that unlike FMU and CDTF, SaCD avoids simultaneous elimination problem (a state where similar patterns are derived multiple times)~\cite{zou2008f}. In Figure~\ref{fig:temporal}(a), pink and blue patterns are highly similar, and  in Figure~\ref{fig:temporal}(d), red pattern is same as the green pattern and pink pattern is same as the blue pattern. In comparison, patterns derived from SaCD are highly distinctive. 

\begin{table}[!t]
\renewcommand{\arraystretch}{1.3}
\caption{Pattern Distinctiveness ($PD$) and Runtime on the Gowalla dataset (lower values are better).}
\label{table_gowalla}
\centering
\begin{tabular}{lll}
\toprule
\bfseries Algorithm & \bfseries PD & \bfseries Runtime in secs\\
\toprule
FMU  & $0.69$   & $633.95$\\
FHALS & $0.41$  & $623.66$\\
BCDP & $0.63$ & $1076.01$\\
CDTF & $0.71$  &  $2537.64$\\
GCD & {\color{red} $o.o.t$}  &  {\color{red} $o.o.t$}\\
SaCD & $\textbf{0.34}$  &  $\textbf{363.27}$\\
\bottomrule
\end{tabular}
\end{table}


\begin{figure*}
	\centering
	\subfloat[FMU]{{\includegraphics[width=2.5in,height=2in]{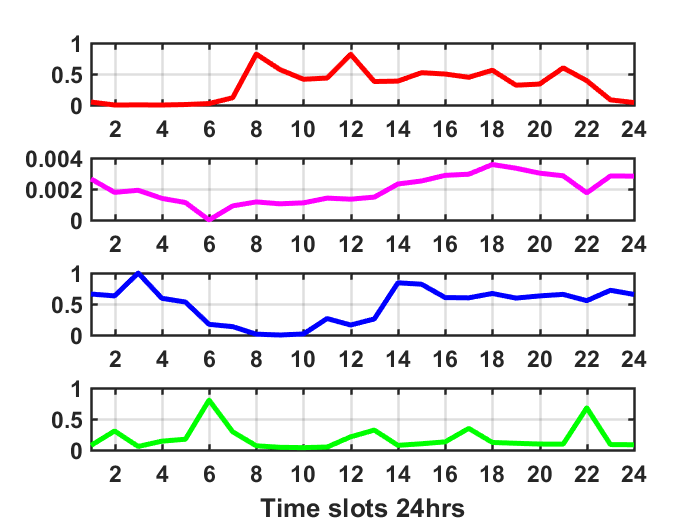}}}%
	\hspace*{5px}
	\subfloat[FHALS]{{\includegraphics[width=2.5in,height=2in]{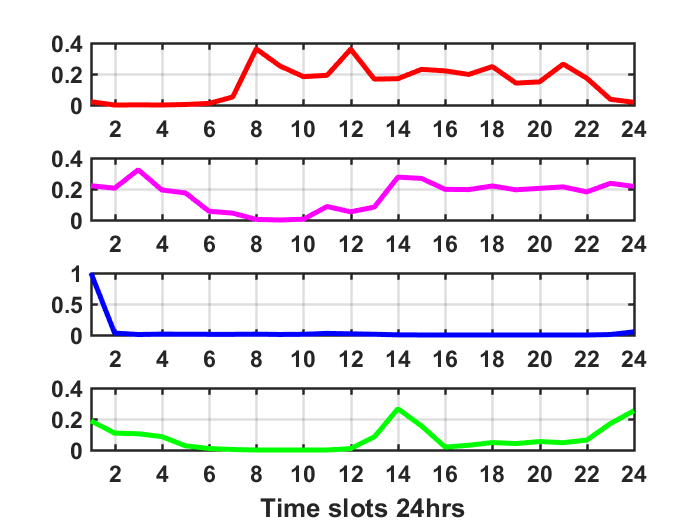}}}%
	\qquad
	\subfloat[BCDP]{{\includegraphics[width=2.5in,height=2in]{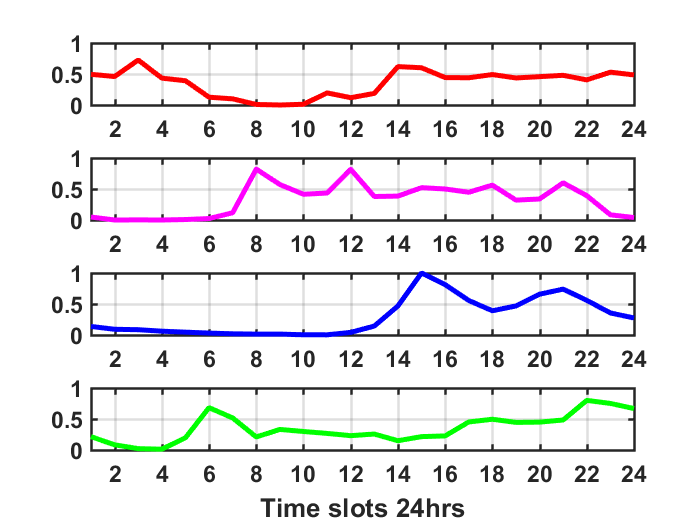}}}%
	\hspace*{5px}
	\subfloat[CDTF]{{\includegraphics[width=2.5in,height=2in]{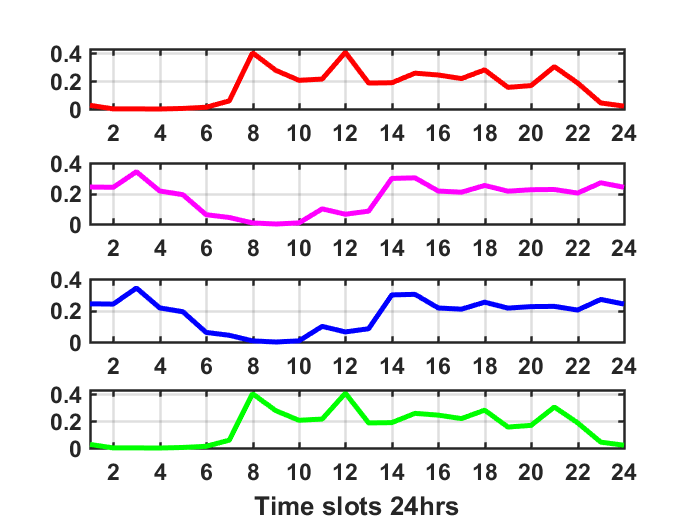}}}%
	\qquad
	\subfloat[SaCD]{{\includegraphics[width=2.5in,height=2in]{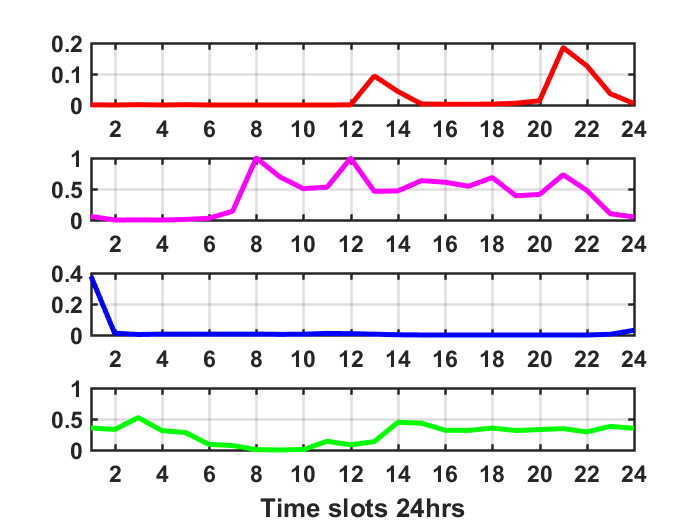}}}%
	\caption{Temporal patterns derived from the 3rd mode (time) of the tensor on the Gowalla dataset.}%
	\label{fig:temporal}
\end{figure*}

\subsection{Parallelization}
Figure~\ref{fig:2}(d) shows the time taken for $mttkrp$ and the factor matrix update without multi-core parallelization. SaCD allows to minimize the time taken for factor matrix update by avoiding the frequent gradient calculation, however, the complexity of $mttkrp$ remains the same as the traditional element selection-based CD algorithm.    
        
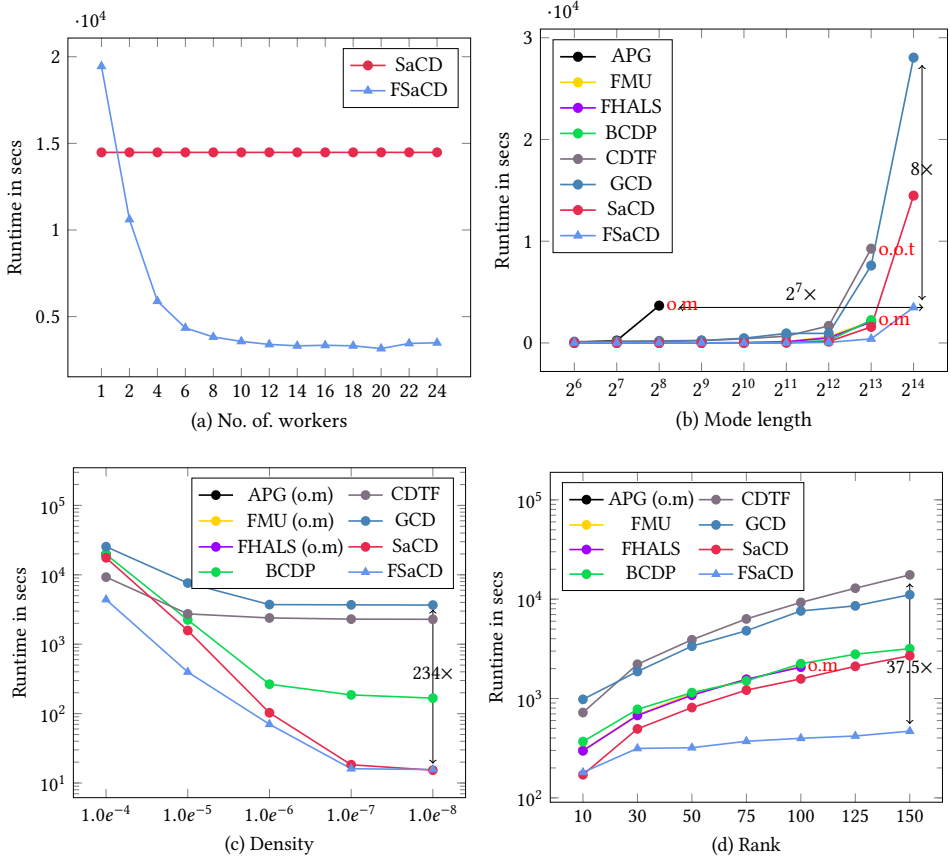
\begin{figure*}
	\centering
	\subfloat{
		\pgfplotstableread[col sep=space]{workers.dat}\datatable%
		\resizebox {.46\columnwidth} {!} {
			\begin{tikzpicture}
			\begin{axis}[
			xticklabels from table={workers.dat}{xaxis},
			xlabel = {(a) No. of. workers},
			xtick=data,
			ylabel={Runtime in secs},
			legend pos=north east
			]%
			
			\addplot[
			csacd,			
			thick,
			mark = *,
			mark options={fill=csacd},
			]table[
			x expr=\coordindex,
			y=sacd,     
			]{\datatable};%
			
			\addplot[
			bleu,
			thick,
			mark = triangle*,
			mark options = {fill=bleu},
			]table[
			x expr=\coordindex,
			y=fsacd,     
			]{\datatable};%
			\legend{SaCD,FSaCD}%
			\end{axis}
			\end{tikzpicture}			
	}}%
	\subfloat{
		\pgfplotstableread[col sep=space]{dimhpc.dat}\datatable%
		\resizebox {.45\columnwidth} {!} {
			\begin{tikzpicture}
			\begin{axis}[
			xticklabels from table={dimhpc.dat}{xaxis},
			xlabel = {(b) Mode length},
			xtick=data,
			ylabel={Runtime in secs},
			legend pos=north west
			]%
			
			\addplot [
			white,
			nodes near coords, 
			forget plot,
			]table[
			x expr=\coordindex,
			y=dum,     
			]{\datatable};%
			
			\addplot [
			capg,
			thick,
			mark=*,
			mark options={fill=capg},
			nodes near coords, 
			point meta=explicit symbolic, 
			]table[
			x expr=\coordindex,
			y=apg,       
			] {\datatable};%
			
			\addplot [
			cfmu,
			thick,
			mark=*,
			mark options={fill=cfmu},
			nodes near coords, 
			point meta=explicit symbolic, 
			]table[
			x expr=\coordindex,
			y=fmu,       
			] {\datatable};%
			
			\addplot [
			cfhals,
			thick,
			mark=*,
			mark options={fill=cfhals},
			nodes near coords, 
			point meta=explicit symbolic, 
			]table[
			x expr=\coordindex,
			y=fhals,       
			] {\datatable};%
			
			\addplot [
			cbcdp,
			thick,
			mark=*,
			mark options={fill=cbcdp},
			nodes near coords, 
			point meta=explicit symbolic, 
			]table[
			x expr=\coordindex,
			y=bcdp,       
			] {\datatable};%
						
			\addplot [
			ccdtf,
			thick,
			mark=*,
			mark options={fill=ccdtf},
			nodes near coords, 
			point meta=explicit symbolic, 
			]table[
			x expr=\coordindex,
			y=cdtf,       
			] {\datatable};%
			
			\addplot [
			cgcd,
			thick,
			mark=*,
			mark options={fill=cgcd},
			nodes near coords, 
			point meta=explicit symbolic, 
			]table[
			x expr=\coordindex,
			y=gcd,       
			] {\datatable};%
			
			\addplot [
			csacd,
			thick,
			mark=*,
			mark options={fill=csacd},
			nodes near coords, 
			point meta=explicit symbolic, 
			]table[
			x expr=\coordindex,
			y=sacd,       
			] {\datatable};%
			
			\addplot[
			bleu,
			thick,
			mark = triangle*,
			mark options = {fill=bleu},
			]table[
			x expr=\coordindex,
			y=fsacd,     
			]{\datatable};%
			
			\addplot [
			red,
			nodes near coords, 
			forget plot,
			only marks,
			point meta=explicit symbolic, 
			every node near coord/.style={anchor=-180} 
			]table[
			x expr=\coordindex,
			y=out,     
			meta index = 11,
			] {\datatable};%
			
			\addplot [
    		red,
    		nodes near coords, 
    		forget plot,
    		only marks,
    		point meta=explicit symbolic, 
    		every node near coord/.style={anchor=-180} 
    		]
    		coordinates {(7, 9200)[o.o.t]};%
			\legend{APG,FMU,FHALS,BCDP,CDTF,GCD,SaCD,FSaCD}%
			
			\node (source) at (axis cs:8.2,3490){};%
			\node (destination) at (axis cs:8.2,28050){};%
			\draw[black, <->](source) -- node [above, midway] {$8\times$} (destination);%
			\node (source) at (axis cs:2.3,3490){};%
			\node (destination) at (axis cs:8.4,3490){};%
			\draw[black, <->](source) -- node [above, midway] {$2^7\times$} (destination);%
			\end{axis}
			\end{tikzpicture}			
	}}%
	\qquad
	\subfloat{
		\pgfplotstableread[col sep=space]{denhpc.dat}\datatable%
		\resizebox {.45\columnwidth} {!} {
			\begin{tikzpicture}
			\begin{axis}[
			legend columns=4,transpose legend,
			xticklabels from table={denhpc.dat}{xaxis},
			xlabel = {(c) Density},
			xtick=data,
			ymode = log,
			ylabel={Runtime in secs},
			legend pos=north east
			]%
			
			\addplot [
			white,
			nodes near coords, 
			forget plot,
			]table[
			x expr=\coordindex,
			y=dum,     
			]{\datatable};%
			
			\addplot [
			capg,
			thick,
			mark=*,
			mark options={fill=capg},
			nodes near coords, 
			point meta=explicit symbolic, 
			]table[
			x expr=\coordindex,
			y=apg,       
			] {\datatable};%
			
			\addplot [
			cfmu,
			thick,
			mark=*,
			mark options={fill=cfmu},
			nodes near coords, 
			point meta=explicit symbolic, 
			]table[
			x expr=\coordindex,
			y=fmu,       
			] {\datatable};%
			
			\addplot [
			cfhals,
			thick,
			mark=*,
			mark options={fill=cfhals},
			nodes near coords, 
			point meta=explicit symbolic, 
			]table[
			x expr=\coordindex,
			y=fhals,       
			] {\datatable};%
			
			\addplot [
			cbcdp,
			thick,
			mark=*,
			mark options={fill=cbcdp},
			nodes near coords, 
			point meta=explicit symbolic, 
			]table[
			x expr=\coordindex,
			y=bcdp,       
			] {\datatable};%
			
			\addplot [
			ccdtf,
			thick,
			mark=*,
			mark options={fill=ccdtf},
			nodes near coords, 
			point meta=explicit symbolic, 
			]table[
			x expr=\coordindex,
			y=cdtf,       
			] {\datatable};%
			
			\addplot [
			cgcd,
			thick,
			mark=*,
			mark options={fill=cgcd},
			nodes near coords, 
			point meta=explicit symbolic, 
			]table[
			x expr=\coordindex,
			y=gcd,       
			] {\datatable};%
			
			\addplot [
			csacd,
			thick,
			mark=*,
			mark options={fill=csacd},
			nodes near coords, 
			point meta=explicit symbolic, 
			]table[
			x expr=\coordindex,
			y=sacd,       
			] {\datatable};%
			
			\addplot[
			bleu,
			thick,
			mark = triangle*,
			mark options = {fill=bleu},
			]table[
			x expr=\coordindex,
			y=fsacd,     
			]{\datatable};%
			
			\legend{APG (o.m),FMU (o.m),FHALS (o.m),BCDP,CDTF,GCD,SaCD,FSaCD}%
			
			\node (source) at (axis cs:4,4000){};%
			\node (destination) at (axis cs:4,15){};%
			\draw[black, <->](source) -- node [above, midway] {$234\times$} (destination);%
			\end{axis}
			\end{tikzpicture}			
	}}%
	\subfloat{
		\pgfplotstableread[col sep=space]{rankhpc.dat}\datatable%
		\resizebox {.45\columnwidth} {!} {
			\begin{tikzpicture}
			\begin{axis}[
			legend columns=4,transpose legend,
			xticklabels from table={rankhpc.dat}{xaxis},
			xlabel = {(d) Rank},
			xtick=data,
			ymode = log,
			ylabel={Runtime in secs},
			legend pos=north west
			]%
			
			\addplot [
			white,
			nodes near coords, 
			forget plot,
			]table[
			x expr=\coordindex,
			y=dum,     
			]{\datatable};%
			
			\addplot [
			capg,
			thick,
			mark=*,
			mark options={fill=capg},
			nodes near coords, 
			point meta=explicit symbolic, 
			]table[
			x expr=\coordindex,
			y=apg,       
			] {\datatable};%
			
			\addplot [
			cfmu,
			thick,
			mark=*,
			mark options={fill=cfmu},
			nodes near coords, 
			point meta=explicit symbolic, 
			]table[
			x expr=\coordindex,
			y=fmu,       
			] {\datatable};%
			
			\addplot [
			cfhals,
			thick,
			mark=*,
			mark options={fill=cfhals},
			nodes near coords, 
			point meta=explicit symbolic, 
			]table[
			x expr=\coordindex,
			y=fhals,       
			] {\datatable};%
			
			\addplot [
			cbcdp,
			thick,
			mark=*,
			mark options={fill=cbcdp},
			nodes near coords, 
			point meta=explicit symbolic, 
			]table[
			x expr=\coordindex,
			y=bcdp,       
			] {\datatable};%
			
			\addplot [
			ccdtf,
			thick,
			mark=*,
			mark options={fill=ccdtf},
			nodes near coords, 
			point meta=explicit symbolic, 
			]table[
			x expr=\coordindex,
			y=cdtf,       
			] {\datatable};%
			
			\addplot [
			cgcd,
			thick,
			mark=*,
			mark options={fill=cgcd},
			nodes near coords, 
			point meta=explicit symbolic, 
			]table[
			x expr=\coordindex,
			y=gcd,       
			] {\datatable};%
			
			\addplot [
			csacd,
			thick,
			mark=*,
			mark options={fill=csacd},
			nodes near coords, 
			point meta=explicit symbolic, 
			]table[
			x expr=\coordindex,
			y=sacd,       
			] {\datatable};%
			
			\addplot[
			bleu,
			thick,
			mark = triangle*,
			mark options = {fill=bleu},
			]table[
			x expr=\coordindex,
			y=fsacd,     
			]{\datatable};%
						\addplot [
			red,
			nodes near coords, 
			forget plot,
			only marks,
			point meta=explicit symbolic, 
			every node near coord/.style={anchor=-180} 
			]table[
			x expr=\coordindex,
			y=out,     
			meta index = 11,
			] {\datatable};%

			\legend{APG (o.m),FMU,FHALS,BCDP,CDTF,GCD,SaCD,FSaCD}%
			
			\node (source) at (axis cs:6,17000){};%
			\node (destination) at (axis cs:6,470){};%
			\draw[black, <->](source) -- node [below, midway] {$37.5\times$} (destination);%
			\end{axis}
			\end{tikzpicture}			
	}}%
	\caption{Scalability analysis of FSaCD and other algorithms using synthetic datasets.}%
	\label{fig:5}%
\end{figure*}
        
If the calculation of $mttkrp$ is parallelized as in the proposed algorithm FSaCD and executed using the cores of a single machine, the time taken for $mttkrp$ can be reduced as shown in  Figure~\ref{fig:5}(a). We evaluate FSaCD in terms of mode length, density, and rank of the tensor. We set the rank to $100$ and density to $0.00001$ to evaluate the performance in terms of mode length. As shown in Figure~\ref{fig:5}(b), FSaCD shows up to $8$ times fast computing comparing to GCD, the next best algorithm. By setting the rank to $100$ and mode length to $2^{13}$, we decreased the density to evaluate the performance in terms of density. As shown in Figure~\ref{fig:5}(c), FSaCD shows up to $234$ $times$ fast computing performance in comparision to GCD. It is interesting to note that SaCD and FSaCD show similar performance on sparse datasets showing that parallelization has no effect. To evaluate the performance in terms of rank, we set the density to $0.00001$ and the mode length of the tensor to $2^{13}$ and increased the rank from $10$ to $150$. FSaCD shows $37.5$ $times$ improved performance in comparision to CDTF as shown in Figure~\ref{fig:5}(d). It ascertains that FSaCD can handle higher rank easily. It is also interesting to note that FSaCD shows a converged performance for ranks where the increase in the rank does not increase the runtime. 

\section{Conclusion}
In this paper, we propose an element selection-based coordinate descent algorithm SaCD that measures the important elements for optimization using Lipschitz continuity and provides the saturated point for early stopping. The proposed Lipschitz continuity based element importance calculation introduces an additional regularity condition to the optimization process and helps to speed-up the convergence. SaCD can efficiently process NTF with higher mode length, rank, and density by reducing the frequent gradient updates.  We also extend SaCD (called FSaCD) for the parallel environment to further improve the performance.  We conducted theoretical and empirical studies to demonstrate the efficiency of SaCD. Theoretical analysis shows the time complexity and memory requirement as well as proves the fast convergence property of SaCD. Empirical analysis shows the superiority of SaCD and FSaCD in comparison to the state-of-the-art algorithms, in terms of tensor size (mode length), rank, and density without compromising the accuracy. Results show the applicability of SaCD in recommendation and pattern mining where efficiency is achieved at no cost of accuracy. SaCD and FSaCD require the element importance matrix stored in the memory which makes it challenging to extend in a distributed environment. In future work, we will explore SaCD and FSaCD to effectively handle the datasets in the distributed environment.

\section*{Acknowledgement}
We like to express our gratitude to Dr. U Kang, Associate Professor, Seoul National University for his comments that had greatly improved the manuscript. His willingness to give his time so generously is very much appreciated.

\bibliographystyle{ACM-Reference-Format}
\bibliography{SaCD}

\end{document}